\DeclareMathOperator*{\Argmin}{arg\;min}
\DeclareMathOperator{\prob}{Pr}
\DeclareMathOperator{\Expect}{\mathbb{E}}
\DeclareMathOperator{\arc}{arc}
\newlength\dlf  
\theoremstyle{definition}
\newtheorem{ass}{Assumption}
\newtheorem{remark}{Remark}
\newtheorem{theorem}{Theorem}
\newtheorem{definition}{Definition}
\title{Large-scale subspace clustering \\ using sketching and
	validation}
\author{Panagiotis~A.~Traganitis,
	Konstantinos~Slavakis,
	and~Georgios~B.~Giannakis,
	\thanks{Panagiotis A. Traganitis and Georgios B. Giannakis are
		with the Dept.\ of Electrical and Computer Engineering and the
		Digital Technology Center, University of Minnesota,
		Minneapolis, MN 55455, USA.\protect\\
		Konstantinos Slavakis is with the Dept.\ of Electrical Engineering,
		University at Buffalo, State University of New York, 
		Buffalo, NY 14260, USA. \protect\\
		This work was supported by the NSF grants 1442686, 1500713,
		Eager~1343860 and the AFOSR grant MURI-FA9550-10-1-0567.\protect\\ 
		E-mails: traga003@umn.edu, kslavaki@buffalo.edu, georgios@umn.edu}
}
\date{\vspace{-5ex}}
\begin{document}
\maketitle

  \begin{abstract}\justifying
    The nowadays massive amounts of generated and communicated data
    present major challenges in their processing. While capable
    of successfully classifying nonlinearly separable objects in
    various settings, subspace clustering (SC) methods incur
    prohibitively high computational complexity when processing
    large-scale data. Inspired by the random sampling and consensus
    (RANSAC) approach to robust regression, the present paper
    introduces a randomized scheme for SC, termed sketching and
    validation (SkeVa-)SC, tailored for large-scale data. At the heart
    of SkeVa-SC lies a randomized scheme for approximating the
    underlying probability density function of the observed data by
    kernel smoothing arguments. Sparsity in data representations is
    also exploited to reduce the computational burden of SC, while
    achieving high clustering accuracy. Performance analysis as well
    as extensive numerical tests on synthetic and real data
    corroborate the potential of SkeVa-SC and its competitive
    performance relative to state-of-the-art scalable SC approaches.
      \smallskip
      \noindent \\\textbf{Keywords.} Subspace clustering, big data, kernel smoothing, randomization,
      sketching, validation, sparsity.
  \end{abstract}



\section{Introduction}
\label{sec:introduction}
\

The turn of the decade has trademarked society and
computing research with a ``data
deluge''~\cite{bigdata.economist.10}. As the number of smart and
internet-capable devices increases, so does the amount of data that is
generated and collected. While it is desirable to mine information
from this data, their sheer amount and dimensionality introduces
numerous challenges in their processing and pattern analysis, since available statistical
inference and machine learning approaches do not necessarily scale
well with the number of data and their dimensionality. In addition, as
the cost of cloud computing is rapidly declining~\cite{Bezoslaw},
there is a need for redesigning those traditional approaches to take
advantage of the flexibility that has emerged from distributing
required computations to multiple nodes, as well as reducing the
per-node computational burden.

Clustering (unsupervised classification) is a method of grouping
unlabeled data with widespread applications across the fields of data
mining, signal processing, and machine learning. $K$-means is one of
the most successful clustering algorithms due to its
simplicity~\cite{bishop2006pattern}. However, $K$-means, as well as
its kernel-based variants, provides meaningful clustering results only
when data, after mapped to an appropriate feature space, form
``tight'' groups that can be separated by
hyperplanes~\cite{hastie01statisticallearning}.

Subspace clustering (SC) is a popular method that can group
non-linearly separable data which are generated by a union of (affine)
subspaces in a high-dimensional Euclidean
space~\cite{vidal2010tutorial}. SC has well-documented impact in various applications,
as diverse as image and video segmentation and identification of
switching linear systems in controls~\cite{vidal2010tutorial}. Recent
advances advocate SC algorithms with high clustering performance at
the price of high computational complexity~\cite{vidal2010tutorial}.

The goal of this paper is to introduce a randomized framework for
reducing the computational burden of SC algorithms when the number of
available data becomes prohibitively large, while maintaining high
levels of clustering accuracy. The starting point
is our sketching and validation (SkeVa) approach in~\cite{skeva}. SkeVa
offers a low-computational complexity, randomized scheme for
multimodal probability density function (pdf) estimation, since it
draws random computationally affordable sub-populations of data to obtain a crude
\textit{sketch} of the underlying pdf of the massive data
population. A \textit{validation} step, based on divergences of pdfs,
follows to assess the quality of the crude sketch. These sketching
and validation phases are repeated independently for a pre-fixed
number of times, and the draw achieving the ``best score'' is finally
utilized to cluster the whole data population. SkeVa is inspired from
the random sampling and consensus (RANSAC)
method~\cite{RANSAC81}. Although RANSAC's principal field of
application is the parametric regression problem in the presence of
outliers, it has been also employed in SC~\cite{vidal2010tutorial}.

To achieve the goal of devising an SC scheme with low computational
complexity footprint, the present paper broadens the scope of SkeVa to
the SC context. Moreover, to support the state-of-the-art performance
of SkeVa on real-data, this contribution provides a rigorous performance
analysis on the lower bound of the number of independent draws for
SkeVa to identify a draw that ``represents well'' the underlying data
pdf, with high probability. The analysis is facilitated by the
non-parametric estimation framework of kernel
smoothing~\cite{wand1994kernel}, and models the underlying
data pdf as a mixture of Gaussians, which are known to offer universal pdf
approximations~\cite{universalapprox, gpml}. To assess the proposed
SkeVa-SC, extensive numerical tests on synthetic and real-data are
presented to underline the competitive performance of SkeVa-SC
relative to state-of-the-art scalable SC approaches.

The rest of the paper is organized as
follows. Section~\ref{sec:prelim} provides SC preliminaries along
with notation and tools for kernel density estimation.
Section~\ref{sec:proposed_algorithm} introduces the proposed algorithm
for large-scale SC, while Section~\ref{sec:performance}
provides performance bounds for
SkeVa-SC. Section~\ref{sec:numerical_tests} presents numerical tests
conducted to evaluate the performance of SkeVa-SC in comparison with
state-of-the-art SC algorithms. Finally, concluding remarks and future
research directions are given in Section~\ref{sec:conclusion}.

\noindent\textbf{Notation:} Unless otherwise noted, lowercase bold letters, $\bm{x}$, denote
vectors, uppercase bold letters, $\mathbf{X}$, represent matrices, and
calligraphic uppercase letters, $\mathcal{X}$, stand for sets. The
$(i,j)$th entry of matrix $\mathbf{X}$ is denoted by
$[\mathbf{X}]_{ij}$; $\mathbb{R}^{D}$ stands for the
$D$-dimensional real Euclidean space, $\mathbb{R}_{+}$ for the set of positive real numbers, $\Expect[\cdot]$ for
expectation, and $\|\cdot\|$ for the $L_2$-norm.

\section{Preliminaries}\label{sec:prelim}

\subsection{SC problem statement}\label{ssec:subspaceclustering}

Consider data $\{\bm{x}_{i}\in\mathbb{R}^{D}\}_{i=1}^{N}$ drawn from a
union of $K$ affine subspaces, each denoted by
$\mathcal{S}_k$, adhering to the model:
\begin{equation}
  \label{eq:pointsubspace}
  \bm{x}_i = \mathbf{U}_k\bm{y}_i + \bm{m}_k + \bm{v}_i\,,\quad
  \forall\bm{x}_i\in\mathcal{S}_k 
\end{equation}
where $d_k$ (possibly with $d_k\ll D$) is the dimensionality of
$\mathcal{S}_k$; $\mathbf{U}_k$ is a $D\times d_k$ matrix whose
columns form a basis of $\mathcal{S}_k$, $\bm{y}_i\in\mathbb{R}^{d_k}$
is the low-dimensional representation of $\bm{x}_i$ in
$\mathcal{S}_k$ with respect to (w.r.t.) $\mathbf{U}_k$,
$\bm{m}_k\in\mathbb{R}^{D}$ is the ``centroid'' or intercept of
$\mathcal{S}_k$, and $\bm{v}_i\in\mathbb{R}^{D}$ denotes the noise
vector capturing unmodeled effects. If $\mathcal{S}_k$ is linear then
$\bm{m}_k = \bm{0}$. Using \eqref{eq:pointsubspace}, \textit{any} $\bm{x}_i$ can be described as 
\begin{equation}
  \label{eq:allpoints}
  \bm{x}_i = \sum_{k=1}^{K}[\bm{\pi}_{i}]_k\left(\mathbf{U}_k\bm{y}_i
    + \bm{m}_k\right) + \bm{v}_i
\end{equation}
where $\bm{\pi}_i$ is the cluster assignment vector for 
$\bm{x}_i$ and $[\bm{\pi}_{i}]_k$ denotes the $k$th entry of
$\bm{\pi}_i$ under the constraints of $[\bm{\pi}_{i}]_k\ge 0$ and
$\sum_{k=1}^{K}[\bm{\pi}_{i}]_k = 1$. If $\bm{\pi}_i\in\{0,1\}^{K}$
then datum $\bm{x}_i$ lies in only one subspace (hard
clustering), while if $\bm{\pi}_i\in[0,1]^{K}$, then $\bm{x}_i$
can belong to multiple clusters (soft clustering). In the latter case,
$[\bm{\pi}_{i}]_k$ can be thought of as the probability that datum
$\bm{x}_i$ belongs to $\mathcal{S}_k$.

Given the data matrix $\mathbf{X} :=
[\bm{x}_1,\bm{x}_2,\ldots,\bm{x}_N]\in\mathbb{R}^{D\times N}$ and the
number of subspaces $K$, SC involves finding the data-to-subspace
assignment vectors $\{\bm{\pi}_i\}_{i=1}^{N}$, the subspace bases
$\left\{\mathbf{U}_k\right\}_{k=1}^{K}$, their dimensions
$\{d_k\}_{k=1}^{K}$, the low-dimensional representations
$\{\bm{y}_i\}_{i=1}^{N}$, as well as the centroids 
$\{\bm{m}_k\}_{k=1}^{K}$~\cite{vidal2010tutorial}. SC can
be formulated as follows 
\begin{equation} 
  \label{eq:subspaceclustering}
  \begin{aligned}
    &\underset{\mathbf{\Pi},\{\mathbf{U}_k\},\{\bm{y}_i\},\mathbf{M}}{\min} 
    &&\sum_{k=1}^{K}\sum_{i=1}^{N} [\bm{\pi}_{i}]_k\|\bm{x}_i -
    \mathbf{U}_k\bm{y}_i - \bm{m}_k \|_2^2 \\ 
    &\text{subject to (s.to)} && \mathbf{\Pi}^{\top}\bm{1} = \bm{1};
    \quad [\bm{\pi}_{i}]_k\geq 0,\ \forall (i,k)
  \end{aligned}
\end{equation}
where $\mathbf{\Pi} := [\bm{\pi}_1,\ldots,\bm{\pi}_N]$, $\mathbf{M} :=
[\bm{m}_1,\bm{m}_2,\ldots,\bm{m}_K]$,
and $\bm{1}$ denotes the all-ones vector of matching dimensions.

The problem in \eqref{eq:subspaceclustering} is non-convex as all of
$\mathbf{\Pi}, \{\mathbf{U}_k\}_{k=1}^{K}, \{d_k\}_{k=1}^{K}, \{\bm{y}_i\}_{i=1}^{N}$,
and $\mathbf{M}$ are unknown. We outline next a popular
alternating way of solving \eqref{eq:subspaceclustering}. For given
$\bm{\Pi}$ and $\{d_k\}_{k=1}^{K}$, bases of the subspaces can be
recovered using the singular value decomposition (SVD) on the data associated with each subspace. Indeed, given
$\mathbf{X}_k := [\bm{x}_{i_1}, \ldots, \bm{x}_{i_{N_k}}]$, associated with $\mathcal{S}_k$
($\sum_{k=1}^K N_k = N$), a basis $\mathbf{U}_k$ can be obtained from the first $d_k$ (from
the left) singular vectors of
$\mathbf{X}_k - [\bm{m}_k, \ldots, \bm{m}_k]$ where $\bm{m}_k =
(1/N_k)\sum_{i\in\mathcal{S}_k}\bm{x}_i$. On the other hand, when
$\{\mathbf{U}_k, \bm{m}_k\}_{k=1}^K$ are given, the assignment
matrix $\bm{\Pi}$ can be recovered in the case of hard clustering by
finding the closest subspace to each datapoint; that is, $\forall i\in
\{1,2,\ldots,N\}$, $\forall k\in\{1, \ldots, K\}$, we obtain
\begin{equation}
  \label{eq:findclosestsubspace}
  [\bm{\pi}_{i}]_k = \begin{cases}
    1, & \text{if}\ k = \Argmin\limits_{k'\in \{1, \ldots, K\}}
    \left\| \bm{x}_i - \bm{m}_{k'} -
      \mathbf{U}_{k'}\mathbf{U}_{k'}^{\top} \bm{x}_i\right\|_2^2\\   
    0, & \text{otherwise}
  \end{cases}  
\end{equation}
where $\| \bm{x}_i - \bm{m}_k - \mathbf{U}_{k}\mathbf{U}_{k}^{\top} \bm{x}_i
\|_2$ is the distance of $\bm{x}_i$ from $\mathcal{S}_k$.

The $K$-subspaces algorithm~\cite{ksubspaces}, which is a
generalization of the ubiquitous $K$-means one~\cite{lloydkmeans} for
SC, builds upon this alternating minimization of
\eqref{eq:subspaceclustering}: (i) First, fixing $\mathbf{\Pi}$ and
then solving for the remaining unknowns; and (ii) fixing
$\{\mathbf{U}_k, \bm{m}_k\}_{k=1}^K$, and then solving for
$\mathbf{\Pi}$. Since SVD is involved, SC entails high
computational complexity, whenever $d_k$ and/or $N_k$ are
massive. Specifically the SVD of a $D\times N_k$ matrix incurs a
computational complexity of $\mathcal{O}(\alpha D^2 N_k + \beta
N_k^3)$, where $\alpha, \beta$ are algorithm-dependent constants. 
  
It is known that when $K=1$ and
$\mathbf{U}$ is orthonormal, \eqref{eq:subspaceclustering} boils down
to PCA~\cite{jolliffe2002principal}
\begin{equation} 
  \label{eq:PCAclustering}
  \begin{aligned}
    & \underset{\mathbf{U},\{\bm{y}_i\},\mathbf{M}}{\min}
    && \sum_{i=1}^{N} \|\bm{x}_i -
    \mathbf{U}\bm{y}_i - \bm{m} \|_2^2 \\ 
    &\text{s.to} && \mathbf{U}^{\top} \mathbf{U} = \mathbf{I}_{d}
  \end{aligned}
\end{equation}
where $\mathbf{I}_{d}$ ($d := d_k$) stands for the $d\times d$
identity matrix. Notice that for $K=1$, it holds that
$[\bm{\pi}_{i}]_k=1$. Moreover, if
$\mathbf{U}_k := \bm{0}$, $\forall k$, with $K>1$ looking for
$\{\bm{m}_k,\bm{\pi}_k \}_{k=1}^{K}$ amounts to ordinary
clustering
\begin{equation} 
  \label{eq:normalclustering}
  \begin{aligned}
    &\underset{\mathbf{\Pi},\mathbf{M}}{\min}
    &&\sum_{k=1}^{K}\sum_{i=1}^{N} [\bm{\pi}_{i}]_k \|\bm{x}_i -
    \bm{m}_k \|_2^2 \\ 
    &\text{s.to} && \mathbf{\Pi}^{\top}\bm{1} = \bm{1} \,.
  \end{aligned}
\end{equation}
  
Finally, it is also well known that \eqref{eq:normalclustering}
with $\bm{\pi}\in[0,1]^K$ (soft clustering) amounts to pdf estimation
\cite{bishop2006pattern}. This fact will be exploited by our novel approach to perform on large-scale SC, and link the benefits of
pdf estimation with those of high-performance SC algorithms.

\subsection{Kernel density estimation}\label{ssec:kernel}

Kernel smoothing or kernel density estimation~\cite{wand1994kernel} is
a non-parametric approach to estimating pdfs. Kernel density
estimators, similar to the histogram and unlike parametric estimators,
make minimal assumptions about the unknown pdf, $f$. Because they
employ general kernel functions rather than rectangular bins, kernel
smoothers are flexible to attain faster convergence rates than the histogram estimator
as the number of observed data, $n$, tends to
infinity~\cite{wand1994kernel}. However, their convergence rate is
slower than that of parametric estimators. For data
$\{\bm{x}_i\}_{i=1}^{n} \sim f$, drawn from $f$, the kernel density
estimator of $f$ is given by~\cite{wand1994kernel}
\begin{equation}
  \hat{f}(\bm{x}) = \frac{1}{n}\sum_{i=1}^{n} K_{\mathbf{H}}
  (\bm{x}-\bm{x}_i)\
\end{equation}
where $K_{\mathbf{H}}(\bm{x}-\bm{x}_i)$ denotes a pre-defined kernel
function centered at $\bm{x}_i$, with a positive-definite bandwidth
matrix $\mathbf{H}\in\mathbb{R}^{D\times D}$. This bandwidth matrix
controls the amount of smoothing across dimensions. Typically, $K_{\mathbf{H}}(\bm{x})$ is chosen to be a
density so that $\hat{f}(\bm{x})$ is also a pdf. The
role of bandwidth can be understood clearly as $\mathbf{H}$
corresponds to the covariance matrix of the pdf $K_{\mathbf{H}}(\bm{x})$~\cite{wand1994kernel}.

The performance of kernel-based estimators is usually assessed using the mean-integrated square error (MISE)
\begin{equation}
  \label{eq:MISE}
  \text{MISE}(\hat{f};\mathbf{H}) := \Expect\left[\int\left(f(\bm{x}) -
      \hat{f}(\bm{x})\right)^2d\bm{x}\right]
\end{equation}
where $\int$ denotes $D$-dimensional integration, and $d\bm{x}$ stands for
$dx_1\ldots dx_D$, with $\bm{x} := [x_1, \ldots,x_D]^{\top}$. The
choice of $\mathbf{H}$ has a notable effect on MISE. As a result, it makes
sense to select $\mathbf{H}$ such that the MISE is low.

Letting $\mathbf{H} := h^2\mathbf{I}_D$ with $h>0$, facilitates the minimization of \eqref{eq:MISE} w.r.t.\ $h$, especially if $h$ is regarded as a
function of $n$, i.e., $h:= h(n)$, and satisfies
\begin{equation}
\label{eq:hrequirements}
\lim_{n\rightarrow\infty}h = 0, \quad \lim_{n\rightarrow\infty}nh = \infty\,.
\end{equation}
Using the $D$-variate Taylor expansion of $f$ the asymptotically MISE optimal $h$ is~\cite{wand1994kernel}:
\begin{equation}
  \label{eq:amiseoptimalbw}
  h^{*} = \left[\frac{D\int
      K^2(\bm{x})d\bm{x}}{n\int{x_i^2} K(\bm{x})
      d\bm{x} \int(\nabla^2 f(\bm{x}))^2d\bm{x}}\right]^{1/(D+4)}
\end{equation}
where $\int x_i^2K(\bm{x})d\bm{x}$ is not a function of $i$, when
$K(\bm{x})$ is a spherically symmetric compactly supported density,
and $\nabla^2 f(\bm{x}) := \sum_{i=1}^{D}\frac{\partial^2
  f(\bm{x})}{\partial x_i^2}$.  In addition, when the density $f$ to
be estimated is a mixture of Gaussians~\cite{universalapprox, gpml},
and the kernel is chosen to be a Gaussian density
$K_{\mathbf{H}}(\bm{x} - \bm{x}_i) := \phi_{\mathbf{H}}(\bm{x} -
\bm{x}_i)$, where
\begin{equation}
  \label{eq:multivariateGaussian}
  \begin{aligned}
     \phi_{\mathbf{H}}(\bm{x}-\bm{x}_i)
     := \frac{1}{(2\pi)^{D/2}|\mathbf{H}|^{1/2}}\exp \left(-\frac{1}{2}(\bm{x}-
    \bm{x}_i)^{\top}\mathbf{H}^{-1}(\bm{x}-\bm{x}_i)\right)
  \end{aligned}
\end{equation} 
with mean $\bm{x}_i$ and covariance matrix $\mathbf{H}$, it is
possible to express \eqref{eq:MISE} in closed form. This closed-form
is due to the fact that convolution of Gaussians is also Gaussian; that is, 
\begin{equation}
  \label{eq:gaussiantrickmult}
  \int\phi_{\mathbf{H}_i}(\bm{x}-\bm{x}_i)\phi_{\mathbf{H}_j}
  (\bm{x}-\bm{x}_j)d\bm{x} = \phi_{(\mathbf{H}_i +
    \mathbf{H}_j)} (\bm{x}_i - \bm{x}_j)\,.
\end{equation}
If $K_{\mathbf{H}}(\bm{x}) =
\phi_{\mathbf{I}_D}(\bm{x})$, the optimal bandwidth of
\eqref{eq:amiseoptimalbw} becomes
\begin{equation}
  \label{eq:optASbw}
  \begin{aligned}
    h^* = \left[\frac{D\phi_{2\mathbf{I}_D}(\bm{0})}{n
        \int(\nabla^2f(\bm{x}))^2d \bm{x}}\right]^{1/(D+4)}\,.
  \end{aligned}
\end{equation}
The proposed SC algorithm of Section~\ref{sec:proposed_algorithm} will draw
ideas from kernel density estimation, and a proper choice of $h$ will be
proved to be instrumental in optimizing SC performance.

\subsection{Prior work}\label{ssec:prior}

Besides the $K$-subspaces solver outlined in Sec.~\ref{ssec:subspaceclustering}, various algorithms have been
developed by the machine learning~\cite{vidal2010tutorial} and
data-mining community~\cite{parsons2004subspace} to solve
\eqref{eq:subspaceclustering}. A probabilistic (soft) counterpart of
$K$-subspaces is the mixture of probabilistic
PCA~\cite{tipping1999mixtures}, which assumes that data are drawn from
a mixture of degenerate (having zero variance in some dimensions) Gaussians. Building on the same
assumption, the agglomerative lossy compression (ALC)~\cite{ALC2007}
utilizes ideas from rate-distortion theory~\cite{ratedistorionberger}
to minimize the required number of bits to ``encode'' each cluster, up to a certain distortion level. Algebraic schemes, such as
the Costeira-Kanade algorithm~\cite{costeira} and Generalized PCA
(GPCA~\cite{gpca}), aim to tackle SC from a linear
algebra point of view, but generally their performance is guaranteed
only for independent and noise-less subspaces. Other
methods recover
subspaces by finding local linear subspace approximations~\cite{zhang2012hybrid}. See also \cite{zhang2009median,vidalonline} for online
clustering approaches to handling streaming data.

Arguably the most successful class of solvers for
\eqref{eq:subspaceclustering} relies on spectral
clustering~\cite{spectralclustering} to find the data-to-subspace
assignments. Algorithms in this class generate first an $N\times N$ affinity
matrix $\mathbf{A}$ to capture the
similarity between data, and then perform spectral clustering on
$\mathbf{A}$. Matrix $\mathbf{A}$ implies a graph $\mathcal{G}$ whose
vertices correspond to data and edge weights between
data are given by its entries. Spectral
clustering algorithms form the graph Laplacian matrix
\begin{equation}
  \label{eq:laplacian}
  \mathbf{L} := \mathbf{D} - \mathbf{A}
\end{equation}
where $\mathbf{D}$ is a diagonal matrix such
that (s.t.) $[\mathbf{D}]_{ii} = \sum_{j=1}^{N}
[\mathbf{A}]_{ij}$. The algebraic multiplicity of the $0$ eigenvalue
of $\mathbf{L}$ yields the number of connected components in
$\mathcal{G}$~\cite{spectralclustering}, while the corresponding eigenvectors are indicator vectors of the connected
components. For the sake of completeness, Alg.~\ref{alg:Spectral}
summarizes the procedure of how the trailing eigenvectors of
$\mathbf{L}$ are used to obtain cluster assignments in spectral
clustering.


\begin{algorithm}[t]
  \begin{algorithmic}[1]
    \algrenewcommand\algorithmicindent{1em}

    \Require{Data affinity matrix $\mathbf{A}$; number of clusters
      $K$.}
    
    \Ensure{Data-cluster associations.} 

    \State\parbox[t]{\dimexpr\linewidth-3\dimexpr\algorithmicindent}{Form
      diagonal matrix $\mathbf{D}$, with entries $[\mathbf{D}]_{ii}
      := \sum_{j=1}^{N} [\mathbf{A}]_{ij}$.} 

    \State\parbox[t]{\dimexpr\linewidth-3\dimexpr\algorithmicindent}{Laplacian
      matrix: $\mathbf{L} := \mathbf{D} - \mathbf{A}$.} 

    \State\parbox[t]{\dimexpr\linewidth-3\dimexpr\algorithmicindent}{Extract
      $K$ trailing eigenvectors
      $\{\mathbf{v}_k\in\mathbb{R}^{N}\}_{k=1}^{K}$ of
      $\mathbf{L}$. Let $\mathbf{V} =
      [\mathbf{v}_1,\ldots,\mathbf{v}_K]\in\mathbb{R}^{N\times K}$.} 
    
    \State\parbox[t]{\dimexpr\linewidth-3\dimexpr\algorithmicindent}{Let
      $\{\mathbf{z}_i\}_{i=1}^{N} $ be the rows of
      $\mathbf{V}$; $\mathbf{z}_i$ corresponds to the $i$th vertex
      ($i$th datapoint).} 
    
    \State\parbox[t]{\dimexpr\linewidth-3\dimexpr\algorithmicindent}{Group
      $\{\mathbf{z}_i\}_{i=1}^{N}$ into $k$ clusters using $K$-means.} 

  \end{algorithmic}
  \caption{Unnormalized spectral
    clustering~\cite{spectralclustering}} \label{alg:Spectral}  
\end{algorithm}

Sparse subspace clustering (SSC)~\cite{elhamifar2013SSC} exploits the
fact that under the union of subspaces model, \eqref{eq:subspaceclustering}, only a small percentage
of data suffice to provide a low-dimensional affine representation of
any $\bm{x}_i$, i.e., $\bm{x}_i = \sum_{j=1,j\neq
  i}^{N}w_{ij}\bm{x}_j$, $\forall i\in\{1,2,\ldots,N\}$. Specifically,
SSC solves the following sparsity-imposing optimization problem
\begin{equation}
  \label{eq:SSC}
  \begin{aligned}
    & \underset{\mathbf{W}}{\min} && \|\mathbf{W}\|_1 +
    \lambda\|\mathbf{X} - \mathbf{X}\mathbf{W}\|_2^2 \\ 
    & \text{s.to} && \mathbf{W}^{\top}\bm{1} = \bm{1};\quad
    \text{diag}(\mathbf{W}) = \bm{0}
  \end{aligned}
\end{equation}
where $\mathbf{W} := [\bm{w}_1,\bm{w}_2,\ldots,\bm{w}_N]$; column
$\bm{w}_i$ is sparse and contains the coefficients for the
representation of $\bm{x}_i$; $\lambda>0$ is the regularization
coefficient; and $\|\mathbf{W}\|_1 :=
\sum_{i,j=1}^{N}[\mathbf{W}]_{i,j}$. Matrix $\mathbf{W}$ is used to
create the affinity matrix $[\mathbf{A}]_{ij} := |[\mathbf{W}]_{ij}| +
|[\mathbf{W}]_{ji}|$. Finally, spectral clustering, e.g.,
Alg.~\ref{alg:Spectral}, is performed on $\mathbf{A}$ and cluster
assignments are identified. Using those assignments, $\mathbf{M}$ is
found by taking sample means per cluster, and
$\{\mathbf{U}_k\}_{k=1}^{K}$, $\{\bm{y}_i\}_{i=1}^{N}$ are obtained by
applying SVD on $\mathbf{X}_k - [\bm{m}_k, \ldots, \bm{m}_k]$. For
future use, SSC is summarized in Alg.~\ref{alg:SSC}.

The low-rank representation algorithm (LRR)~\cite{LRR} is similar to
SSC, but replaces the $\ell_1$-norm in \eqref{eq:SSC} with the nuclear
one: $\|\mathbf{W}\|_* := \sum_{i=1}^{\rho}\sigma_i(\mathbf{W})$,
where $\rho$ stands for the rank and $\sigma_i(\mathbf{W})$ for the
$i$th singular value of $\mathbf{W}$. The high clustering accuracy
achieved by both SSC and LRR comes at the price of high
complexity. Solving \eqref{eq:SSC} scales quadratically with the
number of data $N$, on top of performing spectral clustering across
$K$ clusters, which renders SSC computationally prohibitive for
large-scale SC. When data are high-dimensional ($D\gg$), methods based
on (statistical) leverage scores, random projections~\cite{Drineas},
or our recent sketching and validation (SkeVa)~\cite{skeva}
approach can be employed to reduce complexity to an affordable level. When the number of data is
large ($N\gg$), the current state-of-the-art approach, scalable sparse
subspace clustering (SSSC)~\cite{SSSC}, involves drawing randomly
$n<N$ data, performing SSC on them, and expressing the rest of the
data according to the clusters identified by that random draw of
samples. While SSSC clearly reduces complexity, performance can
potentially suffer as the random sample may not be representative of
the entire dataset, especially when $n\ll N$ and clusters are
unequally populated. To alleviate this issue, the present paper
introduces a structured trial-and-error approach to identify a
``representative'' $n$-point sample from a dataset with $n\ll N$,
while maintaining low computational complexity.

\begin{algorithm}[h]
  \begin{algorithmic}[1]
    \algrenewcommand\algorithmicindent{1em}

    \Require{Data $\mathbf{X}$; number of clusters $K$; $\lambda$}
    \Ensure{Data-cluster associations $\mathbf{\Pi}$.} 

    \State\parbox[t]{\dimexpr\linewidth-3\dimexpr\algorithmicindent}{Solve
      \eqref{eq:SSC} for $\mathbf{W}$.}

    \State\parbox[t]{\dimexpr\linewidth-3\dimexpr\algorithmicindent}{$[\mathbf{A}]_{ij} 
      := |[\mathbf{W}]_{ij}| + |[\mathbf{W}]_{ji}|$.} 

    \State\parbox[t]{\dimexpr\linewidth-3\dimexpr\algorithmicindent}{Perform
      spectral clustering, namely, Alg.~\ref{alg:Spectral}, on $\mathbf{A}$.} 
    
    \State\parbox[t]{\dimexpr\linewidth-3\dimexpr\algorithmicindent}{Identify 
      point-to-subspace associations.} 
  \end{algorithmic}
  \caption{Sparse Subspace Clustering (SSC)
    \cite{elhamifar2013SSC}}\label{alg:SSC} 
\end{algorithm}

Regarding kernel smoothing, most of the available algorithms address
the important issue of bandwidth selection $\mathbf{H}$ to achieve
desirable convergence rate properties in the approximation of the
unknown pdf~\cite{hall1983large, scott1987biased,
  sheather1991reliable, hall1992smoothed}. The present paper, however,
pioneers a framework to randomly choose the subset of kernel functions
yielding a small error when estimating a multi-modal pdf.

\section{The SkeVa-SC algorithm}
\label{sec:proposed_algorithm}

The proposed algorithm, named SkeVa-SC,
is listed in Alg.~\ref{alg}. It aims at iteratively finding a
representative subset of $n$ randomly drawn data,
$\check{\mathbf{X}}\in\mathbb{R}^{D\times n}$, run subspace clustering
on $\check{\mathbf{X}}$, and associate the remaining data
$\tilde{\mathbf{X}} :=
\mathbf{X}\setminus\check{\mathbf{X}}\in\mathbb{R}^{D\times (N - n)}$
with the subspaces extracted from $\check{\mathbf{X}}$. SkeVa-SC draws a prescribed number of $R_{\max}$ realizations, over which two phases are performed: A sketching and a validation
phase. The structure and philosophy of SkeVa-SC is based on the notion of divergences between pdfs. 

Per realization $r$, {\it the sketching stage} proceeds as follows: A sub-population,
$\check{\mathbf{X}}^{(r)}$, of the data is randomly drawn, and a pdf
$\hat{f}^{(r)}(\bm{x})$, is estimated based on this sample using kernel smoothing. As the clusters are assumed to be sufficiently separated,
$\hat{f}^{(r)}(\bm{x})$ is expected to be multimodal. To confirm this,
$\hat{f}^{(r)}(\bm{x})$ is compared with a unimodal pdf
$\hat{f}_0^{(r)}(\bm{x})$, using a measure
$d(\hat{f}^{(r)},\hat{f}_0^{(r)})$ of pdf discrepancy. If
$\hat{f}^{(r)}$ is sufficiently different from $\hat{f}_0^{(r)}$ then $d(\hat{f}^{(r)},\hat{f}_0^{(r)}) \ge \Delta_0$, where
$\Delta_0$ is some pre-selected threshold, SkeVa-SC proceeds to the
validation stage; otherwise, SkeVa-SC deems this draw uninformative and advances to the next realization
$r+1$, without performing the validation step.

At the {\it validation stage} of SkeVa-SC, another random sample of $n'$
data, $(n\leq n'\ll N)$, different from the one in
$\check{\mathbf{X}}^{(r)}$, is drawn, forming
$\tilde{\mathbf{X}}^{(r)}\in\mathbb{R}^{D\times n'}$. The purpose of
this stage is to evaluate how well $\check{\mathbf{X}}^{(r)}$ represents the
whole dataset. The pdf $\tilde{f}^{(r)}(\bm{x})$ of
$\tilde{\mathbf{X}}^{(r)}$ is estimated and compared to
$\hat{f}^{(r)}(\bm{x})$ using $d(\hat{f}^{(r)},\tilde{f}^{(r)})$. A
score $\psi[d(\hat{f}^{(r)},\tilde{f}^{(r)})]$ is assigned to
$\check{\mathbf{X}}^{(r)}$, using a non-increasing scoring function
$\psi:\mathbb{R}\rightarrow\mathbb{R}$ that grows as realizations $\hat{f}^{(r)}$ and $\tilde{f}^{(r)}$ come closer.

Finally, after $R_{\max}$ realizations, the set of $n$ data
$\check{\mathbf{X}}^{(r^*)}$ that received the highest score $r^* :=
\arg\max_r \psi[d(\hat{f}^{(r)},\tilde{f}^{(r)})]$, is selected and SC
(SSC or any other algorithm) is performed on it; that is,
\begin{equation}
  \label{eq:SSConsampled}
  \begin{aligned}
    & \underset{\mathbf{W}}{\min} && \|\mathbf{W}\|_1 +
    \lambda\left\|\check{\mathbf{X}}^{(r^*)} -
    \check{\mathbf{X}}^{(r^*)}\mathbf{W}\right\|_2^2 \\ 
    & \text{s.to} && \mathbf{W}^{\top} \bm{1} = \bm{1};\quad
    \text{diag}(\mathbf{W}) = \bm{0}\,.
  \end{aligned}
\end{equation}
The remaining data $\tilde{\mathbf{X}}^{(r^*)} :=
\mathbf{X}\setminus\check{\mathbf{X}}^{(r^*)}$ are associated with the
clusters defined by $\check{\mathbf{X}}^{(r^*)}$. This association
can be performed either by using the residual minimization method,
described in SSSC~\cite{SSSC}, or, if subspace dimensions are known, by
identifying the subspace that is closest to each datum, as in
\eqref{eq:findclosestsubspace}.

\begin{remark}
  \label{re:delta_0_update}
  Threshold $\Delta_0$ can be updated across iterations. If
  $\Delta_0^{(0)} = -\infty$ stands for the initial threshold value, $\psi_{\max}^{(r)}$ for the current maximum score as of iteration $r$ and $\psi_{\max}^{(0)} = -\infty$,
  the threshold, at iteration $r\in\{1,\ldots,R_{\max}\}$, is
  updated as  
  \begin{equation}
    \label{eq:Delta0update}
    \Delta_0^{(r)} := \begin{dcases}
      d(\hat{f}^{(r)},\hat{f}_0^{(r)})\,,  &\text{ if }
      d(\hat{f}^{(r)},\hat{f}_0^{(r)}) \ge \Delta_0^{(r-1)} \\ & \text{ and } \psi[d(\hat{f}^{(r)},\tilde{f}^{(r)})]\ge\psi_{\max}^{(r)}\\  
      \Delta_0^{(r-1)}\,, & \text{ otherwise}\,.
    \end{dcases}
  \end{equation}
\end{remark}

\begin{remark}
  As SkeVa-SC realizations are drawn independently, they can be
  readily parallelized using schemes such as
  MapReduce~\cite{dean2008mapreduce}.
\end{remark}

\begin{remark}
  The sketching and validation scheme does not target the SC goal
  explicitly. Rather, it decides whether a sampled subset of data is
  informative or not.
\end{remark}

To estimate the densities involved at each step of SkeVa-SC, kernel
density estimators, or kernel smoothers~[cf. Section
\ref{ssec:kernel}] are employed. Specifically, SkeVa-SC, seeks to
solve the following optimization problem:
$\min_{\check{\mathbf{X}}^{(r)}} d(f,\hat{f}^{(r)})$, where $\hat{f}^{(r)}(\bm{x})
:= (1/n)\sum_{i=1}^{n} K_{\mathbf{H}} (\bm{x}-\bm{x}_i^{(r)})$, and
$\bm{x}_i^{(r)}$ denotes the $i$th column of
$\check{\mathbf{X}}^{(r)}$. As the pdf $f$ to be estimated is
generally unknown, a prudent choice for the bandwidth matrix is
$\mathbf{H} := h^2\mathbf{I}_D$ with $h>0$, as it provides isotropic
smoothing across all dimensions and greatly simplifies the analysis.

To assess performance in closed form, start with the integrated square error
(ISE)
\begin{equation}
  \label{eq:ISE}
  d_{\text{ISE}}(f,g) := \int\left(f(\bm{x}) -
    g(\bm{x})\right)^2 d\bm{x}
\end{equation}
or, as in \cite{skeva}, the Cauchy-Schwarz divergence~\cite{Principe.ITL.10}
\begin{equation}
  \label{eq:CSdiv}
  d_{\text{CS}}(f,g) := -\log\frac{\left(\int
      f(\bm{x})g(\bm{x})d\bm{x}\right)^2}{\int f^2(\bm{x})d\bm{x}\int
    g^2(\bm{x})d\bm{x}}
\end{equation}
 which will henceforth be adopted as our dissimilarity metric $d$. Moreover, we will choose the Gaussian
multivariate kernel, that is $K_{\mathbf{H}}(\bm{x} - \bm{x}_i) =
\phi_{\mathbf{H}}(\bm{x} - \bm{x}_i)$, with $\phi_{\mathbf{H}}$
defined as \eqref{eq:multivariateGaussian}.

\begin{algorithm}[ht]
  \begin{algorithmic}[1]
    \algrenewcommand\algorithmicindent{1em}
    \Require{Data $\mathbf{X}$; max.\ no.\ of iterations $R_{\max}$;
      bandwidth $h$}
    \Ensure{Clustered data; bases of subspaces}

    \For{$r = 1$ to $R_{\max}$}
    \State\parbox[t]{\dimexpr\linewidth-3\dimexpr\algorithmicindent}
    {Sample $n \ll N$ columns of $\mathbf{X}$ to form
      $\check{\mathbf{X}}^{(r)}$.}  
    \State\parbox[t]{\dimexpr\linewidth-3\dimexpr\algorithmicindent}
    {Estimate pdf $\hat{f}^{(r)}$ of 
      $\check{\mathbf{X}}^{(r)}$; evaluate dissimilarity
      $d(\hat{f}^{(r)},\hat{f}_0^{(r)})$ from a unimodal pdf
      $\hat{f}_0^{(r)}$ [cf.~\eqref{eq:pdfdef}].}
    \If{ $d(\hat{f}^{(r)},\hat{f}_0^{(r)}) \geq \Delta_0$} 
    \State\parbox[t]{\dimexpr\linewidth-3\dimexpr\algorithmicindent}
    {Sample $n'$ $(n\leq n'\ll N)$ new columns to form
      $\tilde{\mathbf{X}}^{(r)}$; evaluate pdf $\tilde{f}^{(r)}$ of
      $\tilde{\mathbf{X}}^{(r)}$.}
    \State\parbox[t]{\dimexpr\linewidth-3\dimexpr\algorithmicindent}
    {Find $d(\hat{f}^{(r)},\tilde{f}^{(r)})$ and score
      $\psi[d(\hat{f}^{(r)},\tilde{f}^{(r)})]$.} 
    \EndIf
    \EndFor
    \State{Select winner as $r^* := \arg\max_{r}\psi[d(\hat{f}^{(r)},\tilde{f}^{(r)})]$.}
    \State{Perform SSC (Alg.~\ref{alg:SSC}) on
      $\check{\mathbf{X}}^{(r^{*})}$ to find $\mathbf{\Pi}$.}\label{skeva:ssc} 
    \State{Associate $\mathbf{X}\setminus\check{\bm{X}}^{(r^*)}$ to
      clusters defined in step~\ref{skeva:ssc}.}

  \end{algorithmic}
  \caption{Sketching and validation SC (SkeVa-SC)}\label{alg}
\end{algorithm}

The estimated pdfs that are used by SkeVa-SC are listed as follows:
\begin{subequations}
  \label{eq:pdfdef}
\begin{align}
    & \hat{f}^{(r)}(\bm{x}) := \frac{1}{n}\sum_{i=1}^{n}
    \phi_{\mathbf{H}} \left(\bm{x} - \bm{x}_i^{(r)}\right) \\  
    &\hat{f}_0^{(r)}(\bm{x}) := \phi_{\mathbf{H}_0}\left(\bm{x} -
      \frac{1}{n}\sum_{i=1}^{n}\bm{x}_i^{(r)}\right) \label{def.hat.f}\\  
    & \tilde{f}^{(r)}(\bm{x}) = \frac{1}{n'}\sum_{i=1}^{n'}
    \phi_{\mathbf{H}'} \left(\bm{x} -
      \tilde{\bm{x}}_i^{(r)}\right) \label{def.tilde.f} 
  \end{align}
\end{subequations}
where $\tilde{\bm{x}}_i^{(r)}$ is the $i$th column of
$\tilde{\mathbf{X}}^{(r)}$, and $\mathbf{H}, \mathbf{H}_0, \mathbf{H}'$
are appropriately defined bandwidth matrices. It is then easy to show
that for a measure such as \eqref{eq:ISE} or \eqref{eq:CSdiv} several of the pdf
dissimilarities can be found in closed-form as [cf. \eqref{eq:gaussiantrickmult}]
\begin{subequations}
\begin{alignat}{2}
    & d_{\text{ISE}}(\hat{f}^{(r)},\hat{f}_0^{(r)})  = &&
    \frac{1}{n^2}\sum_{i=1}^{n}\sum_{j=1}^{n} 
    \phi_{2\mathbf{H}}(\bm{x}_i^{(r)} - \bm{x}_j^{(r)}) 
    + \phi_{2\mathbf{H}_0}(\bm{0}) 
    - \frac{2}{n}\sum_{i=1}^{n}\phi_{\mathbf{H} +
      \mathbf{H}_0}\left(\bm{x}_i^{(r)} -
      \frac{1}{n}\sum_{i=1}^{n}\bm{x}_i^{(r)}
    \right) \label{eq:distpp_0} \\
    & d_{\text{ISE}}(\hat{f}^{(r)},\tilde{f}^{(r)}) = &&  
    \frac{1}{n^2}\sum_{i=1}^{n}\sum_{j=1}^{n}\phi_{2\mathbf{H}}(\bm{x}_i^{(r)} 
    - \bm{x}_j^{(r)}) 
     + \frac{1}{n'^2}\sum_{i=1}^{n'}\sum_{j=1}^{n'}
    \phi_{2\mathbf{H}'}(\tilde{\bm{x}}_i^{(r)} - \tilde{\bm{x}}_j^{(r)}) \notag\\
     & && -\frac{2}{nn'}\sum_{i=1}^{n}\sum_{j=1}^{n'}\phi_{\mathbf{H} +
      \mathbf{H}'}(\bm{x}_i^{(r)} -
    \tilde{\bm{x}}_j^{(r)}) \label{eq:distpp_bar}\\    
    & d_{\text{CS}}(\hat{f}^{(r)},\hat{f}_0^{(r)})  = &&
    -2\log\left[\frac{1}{n}\sum_{i=1}^{n}\phi_{\mathbf{H}+\mathbf{H}_0} 
      \left(\bm{x}_i^{(r)} -
        \frac{1}{n}\sum_{j=1}^{n}\bm{x}_i^{(r)}\right)\right]   
     + \log\frac{1}{n^2}\sum_{i=1}^{n}\sum_{j=1}^{n}\phi_{2\mathbf{H}}
    \left(\bm{x}_i^{(r)} - \bm{x}_j^{(r)}\right) \notag\\ 
    & && +\log\phi_{2\mathbf{H}_0}(\bm{0}) \label{eq:distpp_0_CS}\\
    & d_{\text{CS}}(\hat{f}^{(r)},\tilde{f}^{(r)})  = && 
    -2\log\left[\frac{1}{nn'}\sum_{i=1}^{n}\sum_{j=1}^{n'}\phi_{\mathbf{H} 
        + \mathbf{H}'}\left(\bm{x}_i^{(r)} -
        \tilde{\bm{x}}_j^{(r)}\right)\right]    
     + \log\frac{1}{n^2}\sum_{i=1}^{n}\sum_{j=1}^{n}\phi_{2\mathbf{H}}
    \left(\bm{x}_i^{(r)} - \bm{x}_j^{(r)}\right)  \notag\\
    & && + \log\frac{1}{n'^2}\sum_{i=1}^{n'}\sum_{j=1}^{n'}
    \phi_{2\mathbf{H}'}\left(\tilde{\bm{x}}_i^{(r)} -
      \tilde{\bm{x}}_j^{(r)}\right)\,. \label{eq:distpp_bar_CS} 
\end{alignat}
\end{subequations}
Since finding the Gaussian dissimilarities between $n$
$D$-dimensional data incurs complexity $\mathcal{O}(Dn^2)$,
the complexity of Alg.~\ref{alg} is $o[DR_{\max}(n^2 +
nn' + n'^2)]$ per iteration, if the Gaussian kernel is utilized. As
with any algorithm that involves kernel smoothing, the choice of
bandwidth matrices $\mathbf{H},\mathbf{H}',\mathbf{H}_0$ affects critically the performance of Alg.~\ref{alg}.


\section{Performance Analysis}
\label{sec:performance}

The crux of Alg.~\ref{alg} is the $R_{\max}$ number of independent random
draws to identify a subset of data that ``represents well'' the whole data
population. Since SkeVa-SC aims at large-scale clustering, it is natural to
ask whether $R_{\max}$ iterations suffice to mine a subset of data whose
pdf approximates well the unknown $f$. It is therefore crucial, prior to
any implementation of SkeVa-SC, to have an estimate of the minimum number
of $R_{\max}$ that ensures an ``informative draw'' with high probability. Such
concerns are not taken into account in SSSC~\cite{SSSC}, where only a
\textit{single} draw is performed prior to applying SC. This section
provides analysis to establish such a lower-bound on $R_{\max}$,
when the underlying pdf $f$ obeys a Gaussian mixture model (GMM). Due to the universal
approximation properties of
GMM~\cite{bishop2006pattern,universalapprox,gpml}, such a generic
assumption on $f$ is also employed by the mixture of probabilistic
PCA~\cite{tipping1999mixtures} as well as ALC~\cite{ALC2007}.

Performance analysis will be based on the premises that the multimodal data
pdf $f$ is modeled by a mixture of Gaussian densities. This assumption
seems appropriate as any pdf or multivariate function that is $t$-th order
integrable with $t\in[0,\infty)$, can be approximated by a mixture of
appropriately many Gaussians~\cite{universalapprox, wand1994kernel}.

\begin{ass}\label{as:mixturemult}
  Data are generated according to the GMM
  \begin{equation}\label{eq:mixtureofgaussiansmult}
    f(\bm{x}) = \sum_{\ell=1}^{L} w_\ell
    \phi_{\mathbf{\Sigma}_{\ell}}(\bm{x}-\bm{\mu}_{\ell}),\quad
    \sum_{\ell=1}^{L}w_{\ell} = 1
  \end{equation}
  where $L\ge K$, $\bm{\mu}_{\ell}\in\mathbb{R}^{D}$ and
  $\mathbf{\Sigma}_{\ell}\in\mathbb{R}^{D\times D}$ stand for the mean
  vector and the covariance matrix of the $\ell$th Gaussian pdf,
  respectively, and $\{w_{\ell}\}_{l=1}^L \subset [0,1]$ are the
  mixing coefficients.
\end{ass}
Under (As.~\ref{as:mixturemult}), the mean of the entire dataset is
$\bm{\mu}_0 = \sum_{\ell=1}^{L}w_{\ell}\bm{\mu}_{\ell}$, and can be
estimated by the sample mean of all data drawn from $f$. 

\begin{definition}\label{def:distance}
  A ``dissimilarity'' function $d: \mathcal{X}\times\mathcal{X}
  \rightarrow \mathbb{R}$ is a metric or a distance if the
  following properties hold $\forall f_1, f_2, f_3\in\mathcal{X}$:
  \begin{enumerate*}[label={\bf P\arabic*}),ref={\bf P\arabic*}]
  \item\label{prop:semidef} $d(f_1,f_2) \ge 0$; 
  \item\label{prop:coincidence} $d(f_1,f_2) = 0 \Leftrightarrow f_1 = f_2$; 
  \item\label{prop:symmetry} $d(f_1,f_2) = d(f_2,f_1)$; 
  \item\label{prop:triangle} $d(f_1,f_2) \le d(f_1,f_3) + d(f_3,f_2)$.
  \end{enumerate*}
  Property~\ref{prop:triangle} (depicted in
  Fig.~\ref{fig:triangleineq}) is widely known as the triangle
  inequality. A semi-distance is a function $d$ for which~\ref{prop:semidef}, \ref{prop:symmetry},
  \ref{prop:triangle}, and $[d(f,f) = 0, \forall f\in\mathcal{X}]$
  hold. 
  
  A divergence is a function $d$ where $\mathcal{X}$ is the
  space of pdfs, and for which only~\ref{prop:semidef} and
  \ref{prop:coincidence} hold. The class of Bregman divergences are
  generalizations of the squared Euclidean distance and include the
  Kullback-Leibler~\cite{cover2012elements} as well as the
  Itakura-Saito one~\cite{itakurasaito}, among others. Furthermore,
  generalized symmetric Bregman divergences, such as the
  Jensen-Bregman one, satisfy the triangle inequality~\cite{bregmantriangle}. Although
  $d_{\text{ISE}}$ in \eqref{eq:ISE} is not a distance, since it does not
  satisfy the triangle inequality, $\sqrt{d_{\text{ISE}}}$
  (the $L_2$-norm) is a semi-distance function because it satisfies \ref{prop:semidef}, \ref{prop:symmetry},
  \ref{prop:triangle}, and $[d(f,f) = 0, \forall f\in\mathcal{X}]$.
\end{definition}
  
If $\hat{f}$ denotes an estimate of the data pdf $f$, and $f_0$ stands
for a reference pdf (a rigorous definition will follow),
Fig.~\ref{fig:triangleineq} depicts $f, \hat{f}$ and $f_0$ as points
in the statistical manifold $\mathcal{X}$~\cite{statmanifold}, namely
the space of probability distributions. Letting $\delta' := d(f,f_0)$,
the triangle inequality suggests that
  \begin{equation}
    \label{eq:triangleineq}
    |d(f,\hat{f}) - \delta'| \le d(f_0,\hat{f}) \le d(f,\hat{f}) +
    \delta'\,. 
  \end{equation}
  
  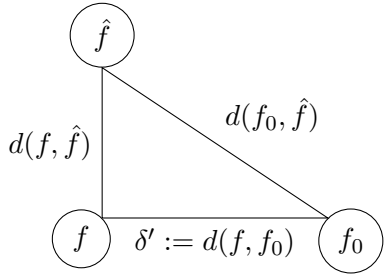
\begin{figure}[h]
    \centering
    \begin{tikzpicture}
      \draw (0,0)
      node[anchor=north east,circle,draw]{$f$} -- (0,2)
      node[midway,auto]{$d(f,\hat{f})$}
      node[anchor=south,circle,draw]{$\hat{f}$}  -- (3,0)
      node[midway,auto]{$d(f_0,\hat{f})$}
      node[anchor=north west,circle,draw]{$f_0$} -- (0,0)
      node[midway,auto]{$\delta' := d(f,f_0)$}; 
    \end{tikzpicture}
    \caption{The triangle inequality in the statistical
      manifold $\mathcal{X}$.}  
          \label{fig:triangleineq}
  \end{figure}

  \begin{definition}\label{def:prbadevent}
    Given a dissimilarity function $d$ which satisfies the triangle
    inequality in Def.~\ref{def:distance}, an event $\mathcal{B}_{\delta}$ per realization of Alg.~\ref{alg}, is
    deemed ``bad'' if the dissimilarity between the kernel-based
    estimator $\hat{f}$ and the true $f$ is larger than or equal
    to some prescribed value $\delta$, i.e.,
    \begin{equation}
      \mathcal{B}_{\delta}:= \{d(\hat{f},f) \geq
      \delta\} \label{def.bad.event}  
    \end{equation}
    where $\{\text{St}(\ldots)\}$ denotes the event of the statement
    $\text{St}(\ldots)$ being true. Naturally, an $\hat{f}$ for which
    the complement $\mathcal{B}_{\delta}^{\complement}$ of
    $\mathcal{B}_{\delta}$ holds true is deemed a ``good''
    estimate of the underlying $f$. Given $f_0\in\mathcal{X}$, and
    $\delta' := d(f,f_0)$, the triangle inequality dictates that $\delta' - d(f_0,\hat{f}) \leq d(f,\hat{f})$. This
    implies that for a fixed $\delta'$, the smaller $d(f_0,\hat{f})$
    is, the larger $d(f,\hat{f})$ becomes. For any arbitrarily fixed
    $\delta_0 \leq \delta' - \delta$, any $\hat{f}$ for which $d(f_0,
    \hat{f}) < \delta_0$ implies that $\delta \leq \delta' - \delta_0
    < \delta' - d(f_0, \hat{f}) \leq d(f,\hat{f})$, i.e.,
    $\mathcal{B}_{\delta}$ occurs [cf.~\eqref{def.bad.event}]. In other words,
    \begin{equation}
      \forall \delta_0\in (0,d(f,f_0) - \delta], \quad
      \{d(f_0,\hat{f})<\delta_0\} \subseteq \mathcal{B}_{\delta}\,. 
      \label{eq:setrelation}
    \end{equation}
    Here, $f_0$ is chosen as the unimodal Gaussian pdf $f_0(\bm{x}) :=
    \phi_{\mathbf{H}_0} (\bm{x} -\bm{\mu}_0)$ that is centered around
    $\bm{\mu}_0$. This is in contrast with the multimodal nature of the
    true $f$, for a number $K>1$ of well-separated
    clusters. According to \eqref{eq:setrelation}, an estimate
    $\hat{f}$ that is ``close'' to the unimodal $f_0$ will give rise to a
    ``bad'' event $\mathcal{B}_{\delta}$.
  \end{definition}
  
  \begin{remark}
    In this section, $f_0$ is centered at the sample mean of the
    entire dataset, while in Alg.~\ref{alg}, $\hat{f}_0$ per realization
    is centered around the sample mean of the sampled data. This is to
    avoid a step that incurs $\mathcal{O}(N)$ complexity in SkeVa-SC. If
    the dataset mean is available (via a preprocessing step) then the
    unimodal pdf per iteration $\hat{f}_0^{(r)}$ can be replaced by $f_0$ induced by the mean of the entire dataset [cf.~\eqref{def.hat.f}].
  \end{remark}

  The maximum required number of iterations $R_{\max}$ can be now
  lower-bounded as summarized in the following theorem. 

  \begin{theorem}\label{th:Rbound}\mbox{}
    \begin{enumerate}
    \item\label{thm:1st.part} Given a distance function $d$
      [cf.~Def.~\ref{def:prbadevent}], a threshold $\delta>0$ [cf.~\eqref{def.bad.event}], a ``success''
      probability $p\in (0,1)$ of Alg.~\ref{alg}, i.e., the
      probability that after $R_{\max}$ realizations a random draw of
      data-points yields an estimate $\hat{f}$ that satisfies
      $\mathcal{B}_{\delta}^{\complement}$
      [cf.~\eqref{def.bad.event}], Alg.~\ref{alg} requires
      \begin{equation}\label{eq:theoremRlowerbound_basic}
        R_{\max} \ge \frac{\log(1-p)}{\log\left(1 -
            \frac{\Expect\left[d(\hat{f},f_0)\right]}{d(f,f_0)-\delta}
          \right)} =: \varrho
      \end{equation}
      where the
      expectation is taken w.r.t.\ the data pdf $f$; that is,
      \begin{equation*}
        \Expect\left[d(\hat{f},f_0)\right] := \int
        d(\hat{f},f_0)f(\bm{x})d\bm{x}\,.
      \end{equation*}
      
    \item\label{thm:2nd.part} Under (As.~\ref{as:mixturemult}) and with $d:= \sqrt{d_{\text{ISE}}}$
      [cf.~\eqref{eq:ISE}], an overestimate $\hat{\varrho}$ with prescribed
      probability $1-q$ of the lower bound $\varrho$ in
      \eqref{eq:theoremRlowerbound_basic} is given by
      \begin{equation}
        \label{eq:theoremRlowerbound} \hat{\varrho} :=
        \frac{\log(1-p)}{\log\left(1 - \frac{\Expect[d_{\text{ISE}}(\hat{f},f_0)]}{\left(\theta_1 +
              \theta_2\right)^2}\right)}
      \end{equation}
      with the expected value expressed in closed form as
      \begin{subequations}
        \begin{alignat}{2}  
          &\Expect[d_{\text{ISE}}(\hat{f},f_0)] = 
           \frac{1}{(4\pi)^{D/2}|\mathbf{H}_0|^{1/2}} +
          \frac{1}{n}\frac{1}{(4\pi)^{D/2}|\mathbf{H}|^{1/2}} 
           + \left(1 -
            \frac{1}{n}\right)\mathbf{w}^{\top}\mathbf{\Omega}_2\mathbf{w} 
          \notag\\
          & \hspace{80pt} - 2\sum_{\ell=1}^{L}w_{\ell}\phi_{\mathbf{H}+\mathbf{H}_0 +
            \mathbf{\Sigma}_{\ell}}(\bm{\mu}_{\ell} -
          \bm{\mu}_0) \label{eq:zeta2}\\
          & \theta_1  := 
          \left[-\frac{2\log(q/2)}{nh\left(4\pi\right)^{D/2}}\right]^{1/2} + 
          \left\{\frac{1}{n(4\pi)^{D/2}|\mathbf{H}|^{1/2}}\right. 
           + \left. \mathbf{w}^{\top}\left[\left(1-
                \frac{1}{n}\right)\mathbf{\Omega}_2 - 2\mathbf{\Omega}_1 +
              \mathbf{\Omega}_0\right]\mathbf{w}
          \right\}^{1/2}\label{eq:theta1}\\
          & \theta_2  :=  \left\{
            \mathbf{w}^{\top}\mathbf{\Omega}_0\mathbf{w} + 
            \frac{1}{(4\pi)^{D/2}|\mathbf{H}_0|^{1/2}} \right. 
           \left. - 2\sum_{\ell=1}^{L}w_{\ell}\phi_{\mathbf{\Sigma}_{\ell} +
              \mathbf{H}_0}(\bm{\mu}_{\ell} - \bm{\mu_0}) \right\}^{1/2} 
        \end{alignat}
      \end{subequations}
      where $\mathbf{w} := [w_1,w_2,\ldots,w_L]^{\top}$ is the vector
      of mixing coefficients of
      \eqref{eq:mixtureofgaussiansmult} and
      $\mathbf{\Omega}_\alpha\in\mathbb{R}^{L\times L},
      \alpha\in\{0,1,2\}$, is a matrix with $(i,j)$ entry
      \begin{equation}
        \label{eq:omegamultivariate} [\mathbf{\Omega}_{\alpha}]_{ij} =
        \phi_{\alpha\mathbf{H}+\mathbf{\Sigma}_i +
          \mathbf{\Sigma}_j}(\bm{\mu}_i - \bm{\mu}_j)\,.
      \end{equation}
    \end{enumerate}
  \end{theorem}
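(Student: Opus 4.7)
The plan for Part~\ref{thm:1st.part} is a three-step pipeline combining the triangle inequality (already encoded in \eqref{eq:setrelation}), Markov's inequality, and the independence of draws. Instantiating \eqref{eq:setrelation} with the tightest admissible choice $\delta_0 := d(f,f_0) - \delta$ and taking complements gives $\mathcal{B}_\delta^{\complement} \subseteq \{d(\hat{f}, f_0) \geq d(f,f_0) - \delta\}$, so Markov's inequality on the nonnegative $d(\hat{f}, f_0)$ produces the per-realization bound $\Pr[\mathcal{B}_\delta^{\complement}] \leq \Expect[d(\hat{f},f_0)]/(d(f,f_0) - \delta)$. Since the $R_{\max}$ realizations of Alg.~\ref{alg} are independent, the probability of at least one good draw is upper bounded by $1 - (1 - \Expect[d(\hat{f},f_0)]/(d(f,f_0) - \delta))^{R_{\max}}$; requiring this to dominate $p$ and inverting the resulting inequality yields \eqref{eq:theoremRlowerbound_basic}.

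For Part~\ref{thm:2nd.part}, I would begin by deriving closed forms for $\Expect[d_{\text{ISE}}(\hat{f}, f_0)]$, $\Expect[d_{\text{ISE}}(\hat{f}, f)]$, and $d_{\text{ISE}}(f, f_0)$ under~(As.~\ref{as:mixturemult}). Expanding each $\int (\hat{f} - g)^2 = \int \hat{f}^2 - 2\int \hat{f} g + \int g^2$, taking expectations over the iid sample, and repeatedly invoking the Gaussian convolution identity~\eqref{eq:gaussiantrickmult} collapses the integrals to quadratic forms in $\mathbf{w}$ and shifted-covariance Gaussian evaluations at $\boldsymbol{\mu}_\ell - \boldsymbol{\mu}_0$. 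The diagonal and off-diagonal contributions of the double sum in $\int \hat{f}^2$ give the $1/n$ and $(1-1/n)$ factors; the quadratic forms $\mathbf{w}^\top \mathbf{\Omega}_\alpha \mathbf{w}$ with $\alpha\in\{0,1,2\}$ come out of $\int f^2$, $\Expect[\int \hat{f} f]$ and $\Expect[\int \hat{f}^2]$, respectively; and $\int f_0^2 = \phi_{2\mathbf{H}_0}(\mathbf{0})$. This reproduces \eqref{eq:zeta2}, identifies $\theta_2^2$ with $d_{\text{ISE}}(f,f_0)$, and identifies the inner bracket of the second $\{\cdot\}^{1/2}$ of $\theta_1$ with $\Expect[d_{\text{ISE}}(\hat{f},f)]$.

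Next, I would establish $d(\hat{f}, f) \leq \theta_1$ with probability at least $1 - q$ via McDiarmid's inequality applied to $F(\mathbf{x}_1, \ldots, \mathbf{x}_n) := \sqrt{d_{\text{ISE}}(\hat{f}, f)}$. Replacing a single $\mathbf{x}_i$ perturbs $\hat{f}$ by $(1/n)[\phi_{\mathbf{H}}(\cdot - \mathbf{x}_i') - \phi_{\mathbf{H}}(\cdot - \mathbf{x}_i)]$, so the reverse triangle inequality for the $L_2$-semi-distance yields a bounded-difference constant proportional to $(1/n)\sqrt{\phi_{2\mathbf{H}}(\mathbf{0})}$; setting the two-sided McDiarmid tail at $q$ produces the first summand of $\theta_1$, and Jensen's inequality $\Expect[\sqrt{d_{\text{ISE}}(\hat{f},f)}] \leq \sqrt{\Expect[d_{\text{ISE}}(\hat{f},f)]}$ supplies the second. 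The triangle inequality for $\sqrt{d_{\text{ISE}}}$ then gives $d(\hat{f}, f_0) \leq \theta_1 + \theta_2$ with probability $\geq 1 - q$. Squaring, applying Markov to $d_{\text{ISE}}(\hat{f}, f_0)$ at the threshold $(d(f,f_0) - \delta)^2$ obtained by squaring the Part~\ref{thm:1st.part} set inclusion, and then loosening the denominator to $(\theta_1 + \theta_2)^2$ (valid since $\theta_1 \geq 0 \geq -\delta$ forces $(\theta_1+\theta_2)^2 \geq (\theta_2 - \delta)^2$) completes \eqref{eq:theoremRlowerbound} after repeating the independence argument.

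The trickiest step will be the McDiarmid calculation: carefully tracking the bounded-difference constant through $\|\phi_{\mathbf{H}}\|_2$ and matching the resulting factor in the stated first summand of $\theta_1$, which fixes the bandwidth choice $\mathbf{H} = h^2 \mathbf{I}_D$. A secondary subtlety is reconciling the Markov-on-$d^2$ bound used in the overestimate with the Markov-on-$d$ bound of Part~\ref{thm:1st.part}, and verifying that the monotonicity of $R_{\max}$ in the per-draw Markov ratio indeed produces $\hat{\varrho} \geq \varrho$ with probability at least $1-q$ rather than in the opposite direction.
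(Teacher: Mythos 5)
Your proposal is correct and follows the same skeleton as the paper's proof: Part~\ref{thm:1st.part} is the identical chain (complement of \eqref{eq:setrelation} at the extreme choice $\delta_0 = d(f,f_0)-\delta$, Markov on the nonnegative $d(\hat{f},f_0)$, then independence of draws and inversion of $(1-M)^{R_{\max}}\le 1-p$), and Part~\ref{thm:2nd.part} uses the same ingredients --- Gaussian convolution identities for the closed forms, Jensen to pass from $\Expect[d]$ to $\sqrt{\Expect[d_{\mathrm{ISE}}]}$, a concentration bound on $d(f,\hat{f})$, the triangle inequality, and Markov applied to $d_{\mathrm{ISE}}$ at a squared threshold. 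The one genuine departure is the concentration step: the paper simply invokes Devroye's exponential inequality \eqref{eq:ebounddelta} for $\|f-\hat{f}\|_p$, whereas you re-derive it via McDiarmid applied to $\|\hat{f}-f\|_2$. That is the right mechanism (it is how such inequalities are proved), and it makes the argument self-contained, but carry the constant through carefully: the bounded-difference constant is $c=(2/n)\|\phi_{h^2\mathbf{I}_D}\|_2=(2/n)(4\pi)^{-D/4}h^{-D/2}$, so McDiarmid yields a tail of $2\exp\bigl(-n t^2 h^{D}(4\pi)^{D/2}/2\bigr)$, i.e.\ $h^{D}$ in the exponent, while the stated $\theta_1$ in \eqref{eq:theta1} (via the paper's $h^{2-2/p}$ with $p=2$) carries $h^{1}$; these agree only for $D=1$, so your route will either force you to restate the first summand of $\theta_1$ with $nh^{D}$ or to adopt the cited form of the inequality as-is. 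Your final loosening of the denominator to $(\theta_1+\theta_2)^2$ is actually a deterministic inequality given $\theta_1\ge 0$ and $\theta_2=\delta'$, which makes the role of the probability $1-q$ look vestigial; the paper has the same looseness (it attaches $1-q$ to the bound $d(f,\hat{f})\le\theta_1$ feeding into $d(f_0,\hat{f})\le\theta_1+\theta_2$), so this is a faithful reproduction of the paper's logic rather than a gap in yours.
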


\begin{proof}
  By definition, $1-p$ is the probability that Alg.~\ref{alg} yields
  $R_{\max}$ ``bad'' draws. Since iterations are independent, it holds
  that
  \begin{equation}
    \label{eq:GlobalBadEvent}
    [\prob(\mathcal{B}_{\delta})]^{R_{\max}} = 1-p\,.
  \end{equation}
  The number of draws $R_{\max}$ can be lower-bounded as 
  \begin{align}
      [\prob (\mathcal{B}_{\delta})]^{R_{\max}} \le 1-p  \Leftrightarrow
      R_{\max}\log\left(\prob(\mathcal{B}_{\delta}) \right) \le 
      \log(1-p) 
       \Leftrightarrow  R_{\max} \ge
      \frac{\log(1-p)}{\log\left(\prob(\mathcal{B}_{\delta})
        \right)} \label{eq:basicRlowerbound} 
  \end{align}
  where the last inequality follows from the trivial fact that
  $\prob(\mathcal{B}_{\delta})<1 \Leftrightarrow
  \log\prob(\mathcal{B}_{\delta}) < 0$.

  Using \eqref{eq:setrelation}, $\prob(\mathcal{B}_{\delta})$ is
  lower-bounded as
  \begin{align}
    \prob(\mathcal{B}_{\delta}) & = \prob\left(d(\hat{f},f)
      \geq\delta\right)  
     \ge \prob\left(d(\hat{f},f_0) <
      \delta_0\right) \label{eq:basicbadevent} 
  \end{align}
  where $\delta_0 := d(f,f_0)-\delta$; thus, when $\delta$ is
  fixed, so is $\delta_0$. Furthermore, Markov's 
  inequality implies that
  \begin{align}
      \prob\left(d(\hat{f},f_0) < \delta_0\right)  = 1 -
      \prob\left(d(\hat{f},f_0) \geq \delta_0\right) 
       \ge 1 - \frac{\Expect\left[d(\hat{f},f_0)\right]}{\delta_0}\,.
    \label{eq:markovd0}
  \end{align}
  Combining \eqref{eq:basicbadevent} with
\eqref{eq:markovd0} yields
  \begin{equation}\label{eq:prbadgeexpect}
    \prob(\mathcal{B}_{\delta}) \ge 1 -
    \frac{\Expect\left[d(\hat{f},f_0)\right]}{\delta_0}\,.   
  \end{equation}
  Using now \eqref{eq:prbadgeexpect} and that $\log(1-p)< 0$, it is
  easy to establish \eqref{eq:theoremRlowerbound_basic} via
  \eqref{eq:basicRlowerbound}. In the case 
  where $1 - {\Expect[d(\hat{f},f_0)]}/{\delta_0} \leq 0$,
  \eqref{eq:prbadgeexpect} is uninformative, and $1$ is used as the
  trivial lower bound of $R_{\max}$. This completes the proof of
  Thm.~\ref{th:Rbound}.\ref{thm:1st.part}.

  Regarding the proof to establish \eqref{eq:theoremRlowerbound},
  closed-form expressions of $\Expect[d(\hat{f},f_0)]$ as well as values
  for $\delta$, $\delta_0$ and $\delta'$, using
  $d_{\text{ISE}}(\cdot,\cdot)$, are provided in Appendix
  \ref{app:boundsISE}.
\end{proof}

Fig.~\ref{fig:Rmax} depicts $\hat{\varrho}$ evaluated using \eqref{eq:theoremRlowerbound} as $n$ increases for a synthetic one-dimensional ($D=1$)
dataset with $N=480$ and $K=3$ clusters generated according to
\eqref{eq:mixtureofgaussiansmult}. The cluster means are $\{0,0.5,1\}$,
the variances are $\{0.3,0.3,0.3\}$, and the number of data per
cluster are $\{100,180,200\}$, respectively. Using $(p,q):= (0.99, 0.01)$ in
Thm.~\ref{th:Rbound}.\ref{thm:2nd.part}, the results shown in
Fig.~\ref{fig:Rmax} are intuitively pleasing: As the number of sampled
points $n$ increases, the required number of random draws decreases,
and at $n=480$ only one draw is required.
 
For the same dataset, Fig.~\ref{fig:AccRmax} depicts the accuracy (\%
of correctly clustered data [cf.~Section~\ref{sec:numerical_tests}])
of Alg.~\ref{alg} using $K$-means clustering instead of SSC, as
$R_{\max}$ increases. Here, the number of sampled data is fixed to
$n=10$ while the number of data for the validation phase is set to
$n'=50$. Alg.~\ref{alg} is compared to the simple scheme of taking a
\textit{single} random draw of $n$ data and performing $K$-means. The
vertical red line in Fig.~\ref{fig:AccRmax} indicates the value of the overestimate
$\hat{\varrho}$ provided by Thm.~\ref{th:Rbound}.\ref{thm:2nd.part}. In this case, the overestimate suggests roughly $10-15$ realizations above the ``knee'' where clustering performance of Alg.~\ref{alg} improves over the performance of a single random draw. The
results are averaged over $10$ independent Monte Carlo runs.


\begin{figure}[h]
\centering
\includegraphics[width=0.7\columnwidth]{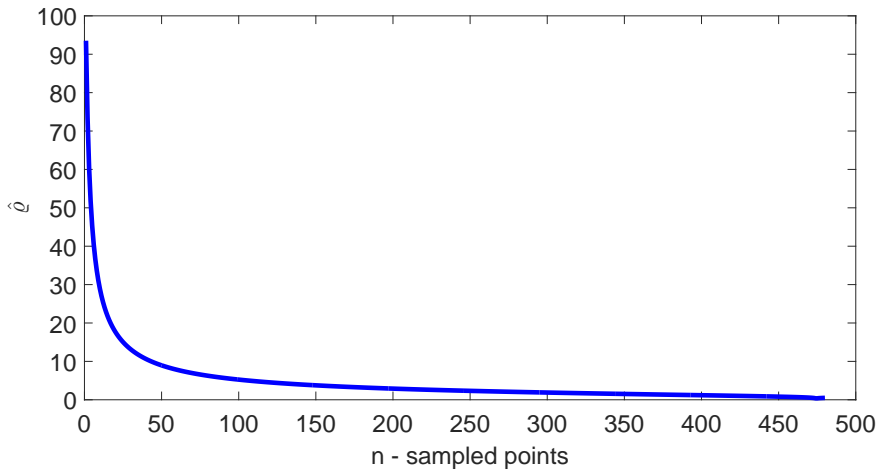}
\caption{Values of $\hat{\varrho}$ versus number of sampled points $n$ for
  $K=3$ clusters generated by \eqref{eq:mixtureofgaussiansmult} with
  $D=1$, cluster means $\{0,0.5,1\}$,  variances $\{0.3,0.3,0.3\}$, and
  number of points per cluster $\{100,180,200\}$.}\label{fig:Rmax} 
\end{figure}

\begin{figure}[h]
\centering
\includegraphics[width=0.7\columnwidth]{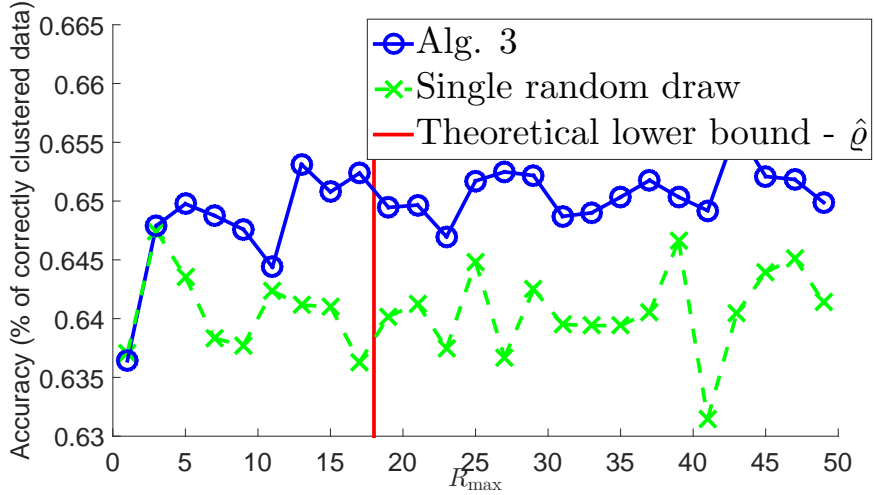}
\caption{Accuracy (\% of correctly clustered data) versus $R_{\max}$
  number of sampled points $n$ for the data used in
  Fig.~\ref{fig:Rmax}. Alg.~\ref{alg} is compared to the simple scheme
  of taking a single random draw of $n$ data, followed by
  clustering. The vertical line represents the $\hat{\varrho}$
  value given by Theorem~\ref{th:Rbound}.} \label{fig:AccRmax}
\end{figure}

When reliable estimates of $f$ are unknown, values for $R_{\max}$ can
be estimated on-the-fly using \eqref{eq:theoremRlowerbound_basic}
and sample averaging. The ensemble averages of
Thm.~\ref{th:Rbound}.\ref{thm:1st.part} can be replaced by sampled
ones, where averaging takes place across iterations. Specifically,
$\Expect[d(\hat{f},f_0)]$ [cf.~\eqref{eq:theoremRlowerbound_basic}] is
replaced by $\bar{d}^{(r)}(\hat{f},f_0)$, which denotes the running sample
average of $d(\hat{f},f_0)$, as of realization $r$. Moreover, since the
number $n'$ of data used in the validation phase is set to be larger
than or equal to the number $n$ of data drawn in the sketching phase
of Alg.~\ref{alg}, $\tilde{f}$ of \eqref{def.tilde.f} provides
potentially better estimates of $f$ than $\hat{f}$ does. As a result,
distance $d(\tilde{f}^{(i)},f_0^{(i)})$ can be employed as a surrogate to
$\delta'$ [cf.~\eqref{eq:triangleineq}]. Further, by $d:=
\sqrt{d_{\text{ISE}}}$ and by using \eqref{eq:triangle_app},
\eqref{eq:deltabound}, as well as replacing ensemble with
sample averages, an estimate of $\delta_0$ at realization $r$ of
Alg.~\ref{alg} is
\begin{equation}
\bar{\delta}_0^{(r)} := \left(
\sqrt{-\frac{2\log(q/2)}{nh\left(4\pi\right)^{D/2}}} +
\bar{d}^{(r)}(\tilde{f},\hat{f}) + \bar{d}^{(r)}(\tilde{f},f_0)\right)^2
\label{eq:bardelta_0}
\end{equation}
where $\bar{d}^{(r)}(\tilde{f},\hat{f})$ and
$\bar{d}^{(r)}(\tilde{f},f_0)$ are the sample averages of
$d(\tilde{f},\hat{f})$ and $d(\tilde{f},f_0)$,
respectively. Therefore, according to \eqref{eq:pdfdef}, the following
quantities can be computed per realization of Alg.~\ref{alg}:
\begin{subequations}
\begin{alignat}{1}
 \bar{d}^{(r)}(f_0,\hat{f})  = 
\frac{1}{r}\sum_{i=1}^{r}d(f_0^{(i)},\hat{f}^{(i)})  = \frac{r-1}{r}\bar{d}^{(r-1)}(f_0,\hat{f}) + \frac{1}{r}d(f_0^{(r)},\hat{f}^{(r)})
\label{eq:sample_df0fhat}
\end{alignat}
\begin{alignat}{1}
 \bar{d}^{(r)}(\tilde{f},\hat{f})  =
\frac{1}{r}\sum_{i=1}^{r}d(\tilde{f}^{(i)},\hat{f}^{(i)})  =
\frac{r-1}{r}\bar{d}^{(r-1)}(\tilde{f},\hat{f}) + \frac{1}{r}d(\tilde{f}^{(r)},\hat{f}^{(r)})
\label{eq:sample_dffhat}
\end{alignat}
\begin{alignat}{1}
 \bar{d}^{(r)}(\tilde{f},f_0)  =
\frac{1}{r}\sum_{i=1}^{r}d(\tilde{f}^{(i)},f_0^{(i)})  =
\frac{r-1}{r}\bar{d}^{(r-1)}(\tilde{f},f_0) + \frac{1}{r}d(\tilde{f}^{(r)},f_0^{(r)})\,.
\label{eq:sample_dff0}
\end{alignat}
\end{subequations}
Consequently, \eqref{eq:theoremRlowerbound_basic} can be approximated
per iteration $r$ using \eqref{eq:bardelta_0} and
\eqref{eq:sample_df0fhat}-\eqref{eq:sample_dff0} as
\begin{equation}
\boxed{\hat{R}_{\max}^{(r)} \ge \frac{\log(1-p)}{\log\left(1 -
    \frac{\bar{d}^{(r)}(f_0,\hat{f})}{\bar{\delta}_0^{(r)}}\right)} =:
\bar{\varrho}^{(r)}.}
\label{eq:estimatedRmax}
\end{equation}
Based on the previous sample averages, a simple rule for updating
$\hat{R}^{(r)}_{\max}$, which quantifies the maximum number of
iterations to be executed in Alg.~\ref{alg}, as of iteration $r$, is
 \begin{equation}
 \hat{R}^{(r)}_{\max} := \max\left\{\bar{\varrho}^{(r)}, R_0
 \right\}\label{eq:Rmaxrule}
 \end{equation}
 where $R_0$ is a prescribed absolute minimum number of iterations.
 Fig.~\ref{fig:hatRmax} shows the values of \eqref{eq:Rmaxrule} as
 iterations of Alg.~\ref{alg} are run for the dataset used in
 Figs.~\ref{fig:Rmax} and \ref{fig:AccRmax}. Here $R_0 := 3$, and
 $\hat{R}_{\max}^{(r)}$ approaches the theoretical value given by
 Thm.~\ref{th:Rbound}, as values of the distances are averaged across
 iterations.

\begin{figure}[h]
\centering
\includegraphics[width=0.7\columnwidth]{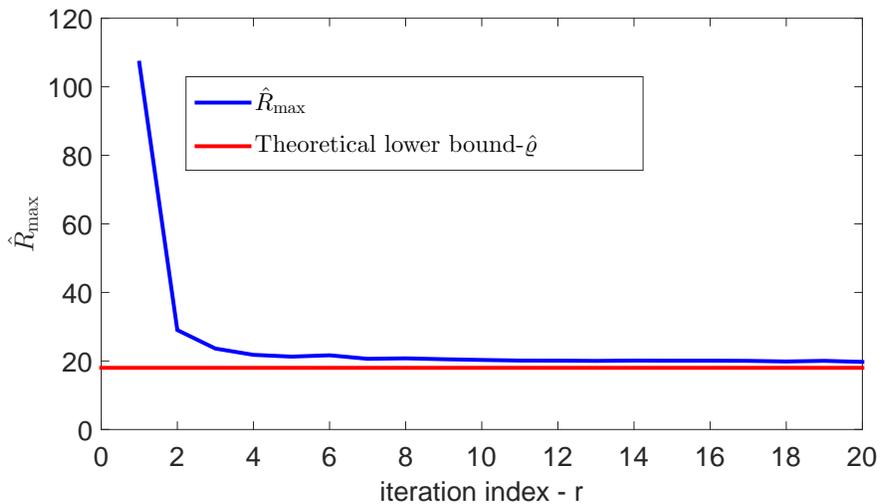}
\caption{Estimates
  $\hat{R}^{(r)}_{\max}$ [cf.~\eqref{eq:Rmaxrule}] versus iteration index
  for a dataset with $K=3$ clusters generated by
  \eqref{eq:mixtureofgaussiansmult} with $D=1$, cluster means
  $\{0,0.5,1\}$, variances $\{0.3,0.3,0.3\}$, and number of points per
  cluster $\{100,180,200\}$. The horizontal line represents the
  $\hat{\varrho}$ value given by Theorem~\ref{th:Rbound} when $f$ is
  known.} \label{fig:hatRmax}
\end{figure}

\section{Numerical Tests}\label{sec:numerical_tests}

The proposed algorithm is validated using synthetic and real
datasets. SkeVa-SC is compared to SSSC, which is the
state-of-the-art algorithm for scalable SC. The metrics
evaluated are:
%
\begin{itemize}
\item Accuracy, i.e., percentage of correctly clustered data:
\begin{equation*}
\text{Accuracy} := \frac{\text{\# of data correctly clustered}}{N}\,.
\end{equation*}

\item Normalized mutual information (NMI)~\cite{cai2005document}
  between experimental and the ground truth labels: 
\begin{equation*}
\text{NMI} := \frac{I(\varPi;\varPi')}{\max\{\mathcal{H}(\varPi),\mathcal{H}(\varPi')\}} 
\end{equation*}
where $\varPi$ is a random variable taking values $\{1,2,\ldots,K\}$
with probabilities $p(\pi_k) = \Pr\left(\varPi = k\right) =
{N_k}/{N}$ of a datum to belong to cluster $k$. Here $N_k$ is the
number of data in cluster $k$, and its value is provided by the
algorithms tested. Quantity $\varPi'$ is a random variable identical
to $\varPi$, but with probabilities derived by the ground-truth
labels, while $I(\varPi;\varPi')$ is the mutual information between
the random variables $\varPi$ and $\varPi'$~\cite{cover2012elements}:
\begin{equation*}
I(\varPi;\varPi') := \sum_{i,j}
p(\pi_i,\pi_j')\log\frac{p(\pi_i,\pi_j')}{p(\pi_i)p(\pi_j')}\,,
\end{equation*}
where $\mathcal{H}(\varPi)$ denotes the entropy of
$\varPi$ defined as~\cite{cover2012elements}
\begin{equation*}
\mathcal{H}(\varPi) := -\sum_{i}p(\pi_i)\log p(\pi_i)\,.
\end{equation*}
\item Time (in seconds) required for clustering all data.
\end{itemize} 
The bandwidth matrix used for Alg.~\ref{alg} is $\mathbf{H} :=
h^2(C,n) \mathbf{I}_D$ with $h$ similar to that in
\eqref{eq:optASbw}, where the unknown quantity pertaining to the
curvature of $f$ is replaced by a user-defined constant $C$; that is
\begin{equation}
\label{eq:bwused}
\begin{aligned}
h(C,n) := \left[\frac{CD\phi_{2\mathbf{I}}(\bm{0})}{n}\right]^{1/(D+4)}
= \left[\frac{CD}{n\left(4\pi\right)^{D/2}}\right]^{1/(D+4)} \,.
\end{aligned}
\end{equation}
Furthermore, the scoring function $\psi$ used in Alg.~\ref{alg} is
 $\psi(x) := x^{-1}$. The software used to conduct all experiments is MATLAB~\cite{MATLAB:2014}. All results represent the
averages of $10$ independent Monte Carlo runs.

\subsection{Synthetic data}

Tests on synthetic datasets of dimensions $D=100$ and $D=500$ with
$K=5$ subspaces are shown in Figs.~\ref{fig:results} and
\ref{fig:results500}, respectively. Subspaces have dimensions
$\{12,10,5,3,2\}$, and the number of data per subspace is proportional
to the subspace dimension: $N_k := 200d_k$. The data per subspace have
been generated according to \eqref{eq:pointsubspace}, where
$\{\bm{m}_k = \bm{0}\}_{k=1}^{K}$, the subspace bases
$\{\mathbf{U}_k\}_{k=1}^{K}$ are randomly generated so that the
subspace angle $\eta(i,j)$ between $\mathcal{S}_i,\mathcal{S}_j$ given by
\begin{equation*}
\eta(i,j) := \min_{\bm{u},\bm{v}}
\left \{\arc\cos
\left(\frac{|\bm{u}^{\top}\bm{v}|}{\|\bm{u}\|\|\bm{v}\|} \right) :
\bm{u}\in{\mathcal{S}}_i,  \bm{v}\in{\mathcal{S}}_j \right\}
\end{equation*}
is at least $\pi/4$, $\forall(i,j) \in \{1, \ldots, K\}^2$, $i\neq
j$.
Projections $\bm{y}_i$ of $\bm{x}_i$'s onto their subspaces are also
randomly drawn from a uniform distribution on the volume of the unit
$d_k$-dimensional hypercube: $\bm{y}_i \sim \mathcal{U}[-1,1]^{d_k}$, where
$\mathcal{U}$ denotes uniform distribution. Finally, AWGN with
variance $\sigma^2 = 0.1$ is added to all data:
$\bm{v}_i\sim\mathcal{N}(\bm{0},\sigma^2\mathbf{I}_D)$. The number of
points for the validation stage of Alg.~\ref{alg} is $n'=600$ and the
maximum number of iterations is set to $R_{\max}=100$. As the number of
sampled data $n$ increases, so does the clustering accuracy, as well as the
normalized mutual information (NMI) between the cluster assignments and the
ground truth ones. Both of these metrics for the proposed method are larger
than those of SSSC, corroborating the fact that a single random sketch
(SSSC) may not be representative of the data to ensure reliable
clustering. Additionally, required processing times by SSSC and SkeVa-SC
are comparable.

\begin{figure}[htb]
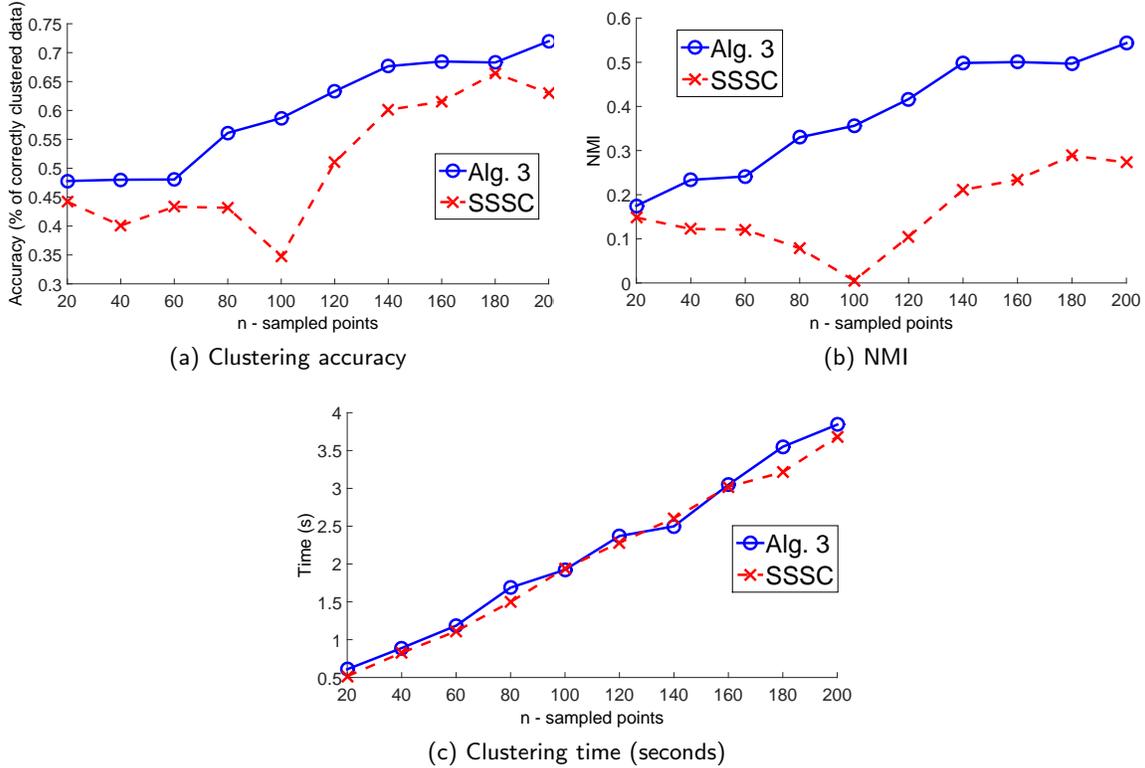

        \centering
        \begin{subfloat}[Clustering
          accuracy]{\includegraphics[width=0.45\columnwidth]{accuracy.eps} 
                        \label{fig:accuracy}}
        \end{subfloat}
        \begin{subfloat}[NMI]{\includegraphics[width=0.45\columnwidth]{NMI.eps} 
                        \label{fig:nmi}}
        \end{subfloat}
        
        \begin{subfloat}[Clustering time
          (seconds)]{\includegraphics[width=0.45\columnwidth]{time.eps} 
                        \label{fig:time}}
        \end{subfloat}
        \caption{Simulated tests on a synthetic dataset with $K=5$
          subspaces, $D=100$ and $N=6,400$
          data.}\label{fig:results}
\end{figure}

\begin{figure}[htb]
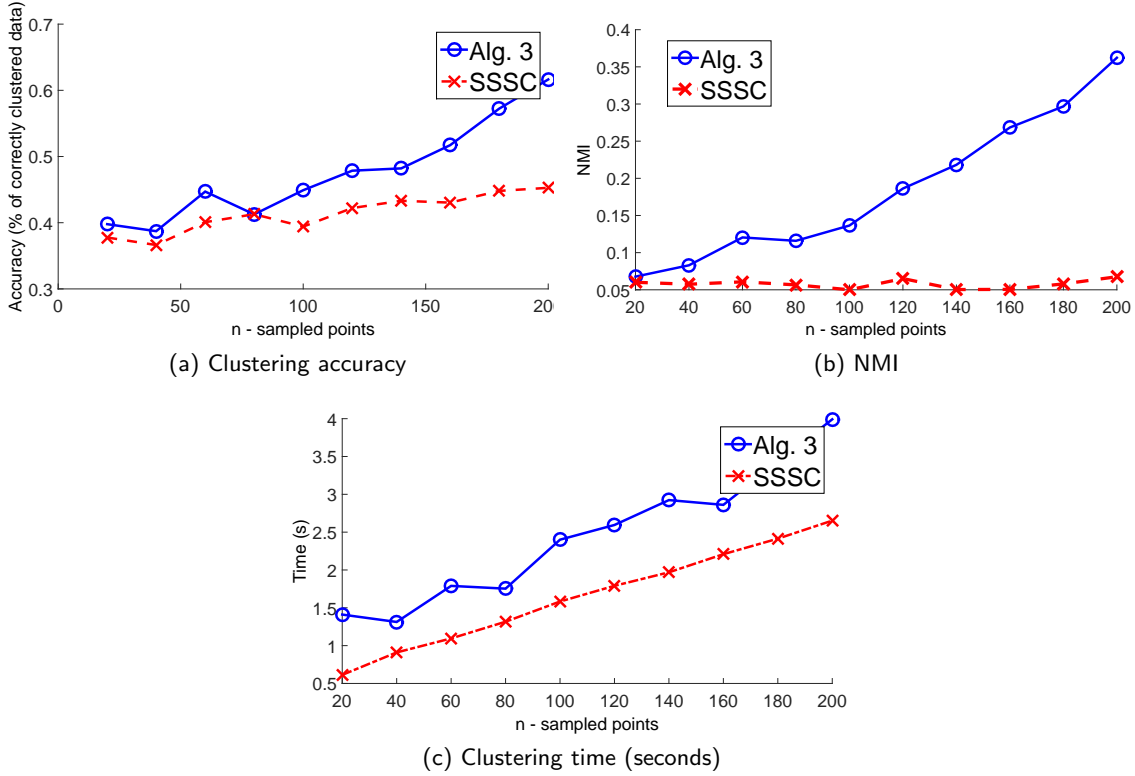

        \centering
        \begin{subfloat}[Clustering
          accuracy]{\includegraphics[width=0.45\columnwidth]{acc-500.eps} 
                        \label{fig:accuracy500}}
        \end{subfloat}%
        \begin{subfloat}[NMI]{\includegraphics[width=0.45\columnwidth]{nmi-500.eps}  
                        \label{fig:nmi500}}
        \end{subfloat}

        \begin{subfloat}[Clustering time
          (seconds)]{\includegraphics[width=0.45\columnwidth]{time-500.eps} 
                        \label{fig:time500}}
        \end{subfloat}
        \caption{Simulated tests on a synthetic dataset with $K=5$ subspaces, 
        $D=500$ and $N=6,400$ data.}\label{fig:results500} 
\end{figure}

\subsection{Real data}

The real datasets tested are the PenDigits~\cite{pendigits}, the
Extended Yale Face ~\cite{yaleb}, and the PokerHand UCI~\cite{UCI}
databases. The PenDigits dataset includes $N=10,992$ data of dimension
$D=16$, separated into $K=10$ clusters, with each datum representing a
handwritten digit. Clusters group same digits, and each cluster
contains $N_k = 250$ data.

The results for the PenDigits dataset are shown in
Fig.~\ref{fig:pendigits}, with $C=10^{-3}$ [cf.~\eqref{eq:bwused}],
$\mathbf{H} = h^2(C,n)\mathbf{I}_D, \mathbf{H}_0 =
({h^2(C,n)}/{4})\mathbf{I}_D, \mathbf{H}' = h^2(C,n')\mathbf{I}_D$,
$n' = 700$, and $R_{\max} = 150$. Similar to the synthetic tests, as the
number of data increases so does the accuracy and NMI of both
Alg.~\ref{alg} and SSSC, with Alg.~\ref{alg} showing higher accuracy
and NMI levels at the cost of higher computational time. The accuracy
and NMI difference between SSSC and Alg.~\ref{alg} is not as
pronounced as in the synthetic datasets, possibly because the
PenDigits dataset is uniform, that is all clusters have the same
number of data.

\begin{figure}[htb]
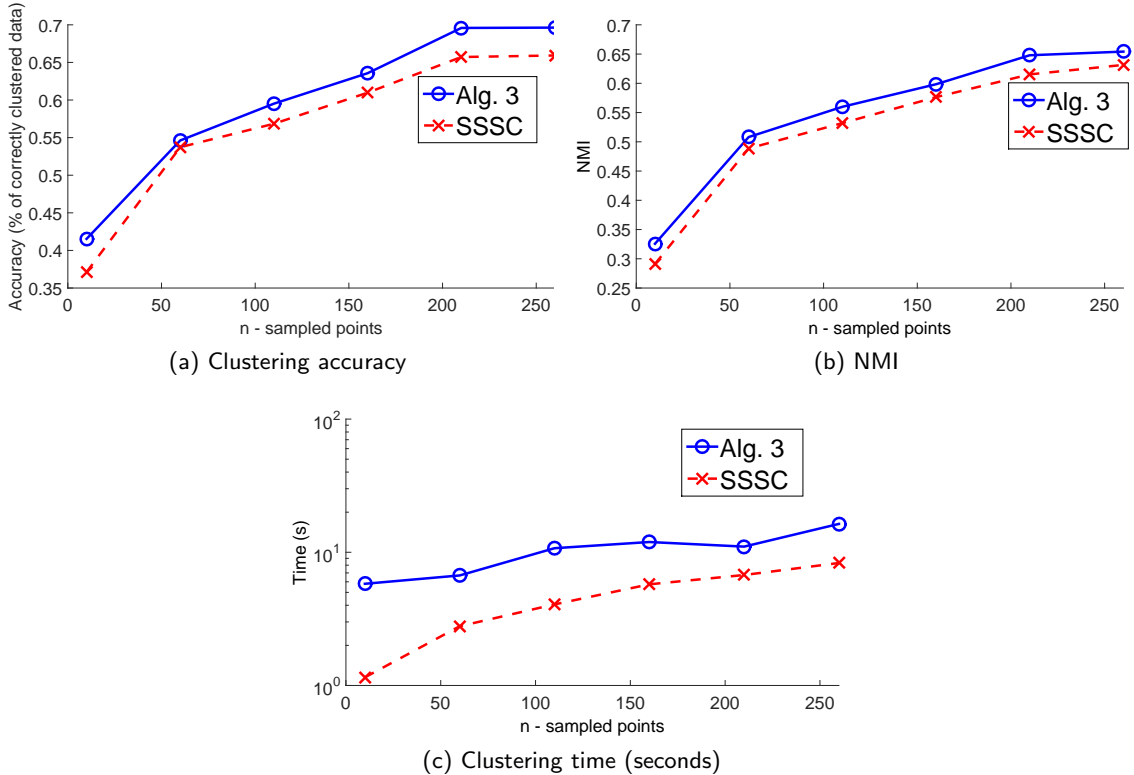
 
        \centering
        \begin{subfloat}[Clustering
          accuracy]{\includegraphics[width=0.45\columnwidth]{pendigitsacc.eps} 
                        \label{fig:pendigitsacc}}
        \end{subfloat}%
        \begin{subfloat}[NMI]{\includegraphics[width=0.45\columnwidth]{pendigitsnmi.eps}  
                        \label{fig:pendigitsnmi}}
        \end{subfloat}

        \begin{subfloat}[Clustering time
          (seconds)]{\includegraphics[width=0.45\columnwidth]{pendigitstime.eps} 
                        \label{fig:pendigitstime}}
        \end{subfloat}
        \caption{Simulated tests on real dataset PenDigits, with
          $N=10,992$ data dimension $D=16$ and $K=10$
          clusters.}\label{fig:pendigits} 
\end{figure}

The Extended Yale Face database contains $N=2,414$ face images of
$K=38$ people, each of dimension $D=2,016$. The dimensionality of the
data was reduced using PCA by extracting the $114$ most important
features and Alg.~\ref{alg} and SSSC were tested on the dimensionality
reduced and normalized data. Fig.~\ref{fig:yale} shows the results for
this dataset, with $C = 10^{-2}$, $\mathbf{H} = h^2(C,n)\mathbf{I}_D,
\mathbf{H}_0 = ({h^2(C,n)}/{4})\mathbf{I}_D, \mathbf{H}' =
{h^2(C,n')}\mathbf{I}$, $n' = 700$, and $R_{\max} = 150$. Again
Alg.~\ref{alg} exhibits higher accuracy and NMI than its one-shot
random sampling counterpart SSSC at basically comparable clustering
time.

Furthermore, Figs.~\ref{fig:yaleacc_a} and \ref{fig:yaletime_a} show
accuracy and time results for the Yale Face database when the maximum
number of iterations is estimated on-the-fly, as described in
Section~\ref{sec:performance}, using
\eqref{eq:theoremRlowerbound_basic} after replacing the ensemble
averages with sample averages across
iterations. Fig.~\ref{fig:yalermax_a} shows the values of
$\hat{R}_{\max}$ versus the number of sampled data. Here $C=1$,
$\mathbf{H} = h^2(C,n)\mathbf{I}_D, \mathbf{H}_0 =
({h^2(C,n)}/{4})\mathbf{I}_D, \mathbf{H}' = ({h(C,n')})^2\mathbf{I}$,
$n' = 600$, $p=0.99$ and $q=0.01$. Similarly to the prescribed
$R_{\max}$ case Alg.~\ref{alg} exhibits higher clustering accuracy
while requiring comparable running time as SSSC.

\begin{figure}[htb]
        \centering
        \begin{subfloat}[Clustering
          accuracy]{\includegraphics[width=0.45\columnwidth]{Yaleacc.eps} 
                        \label{fig:yaleacc}}
        \end{subfloat}%
        \begin{subfloat}[NMI]{\includegraphics[width=0.45\columnwidth]{Yalenmi.eps}
                        \label{fig:yalenmi}}
        \end{subfloat}

        \begin{subfloat}[Clustering time
          (seconds)]{\includegraphics[width=0.45\columnwidth]{Yaletime.eps} 
                        \label{fig:yaletime}}
        \end{subfloat}
        \caption{Simulated tests on real dataset Extended Yale Face
          Database B, with $N=2,414$ data dimension $D=114$ and
          $K=38$ clusters.}\label{fig:yale} 
\end{figure}

\begin{figure}[htb]
        \centering
        \begin{subfloat}[Clustering
          accuracy]{\includegraphics[width=0.45\columnwidth]{yale_adaptive_acc_new.eps} 
                        \label{fig:yaleacc_a}}
        \end{subfloat}%
        \begin{subfloat}[Clustering time (seconds)]{\includegraphics[width=0.45\columnwidth]{yale_adaptive_time_new.eps}
                        \label{fig:yaletime_a}}
        \end{subfloat}

        \begin{subfloat}[$\hat{R}_{\max}$]{\includegraphics[width=0.45\columnwidth]{yale_adaptive_Rmax_new.eps} 
                        \label{fig:yalermax_a}}
        \end{subfloat}
        \caption{Simulated tests on real dataset Extended Yale Face
          Database B, with $N=2,414$ data dimension $D=114$ and
          $K=38$ clusters.}\label{fig:yale_a} 
\end{figure}

The PokerHand database contains $N=10^6$ data, belonging to $K=10$
clusters. Each datum is a $5$-card hand drawn from a deck of $52$
cards, with each card being described by its suit (spades, hearts,
diamonds, and clubs) and rank. Each cluster represents a valid Poker
hand. Fig.~\ref{fig:pokerhand} compares SkeVa-SC with SSSC on this
dataset after selecting $C = 10^{-2}$, $\mathbf{H} =
h^2(C,n)\mathbf{I}_D, \mathbf{H}_0 = ({h^2(C,n)}/{4})\mathbf{I}_D,
\mathbf{H}' = h^2(C,n')\mathbf{I}_D$, $n' = 800$, and $R_{\max} =
150$. Similar to the previous datasets, Alg.~\ref{alg} enjoys higher
accuracy than SSSC, while retaining low clustering time, corroborating
the fact that Alg.~\ref{alg} can handle very large datasets. However,
NMI is at suprisingly low levels, for both algorithms.
 
\begin{figure}[tb]
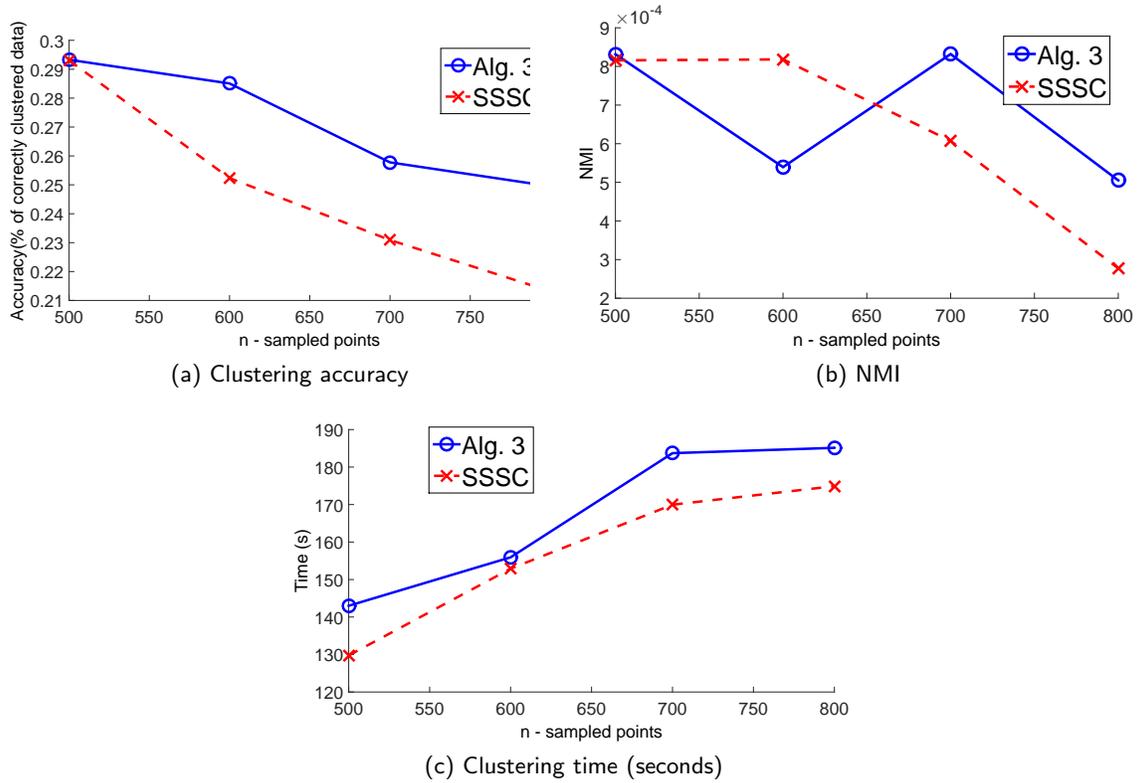

        \centering
        \begin{subfloat}[Clustering
          accuracy]{\includegraphics[width=0.45\columnwidth]{pokerhandacc.eps} 
                        \label{fig:pokerhandacc}}
        \end{subfloat}%
        \begin{subfloat}[NMI]{\includegraphics[width=0.45\columnwidth]{pokerhandnmi.eps} 
                        \label{fig:pokerhandnmi}}
        \end{subfloat}

        \begin{subfloat}[Clustering time
          (seconds)]{\includegraphics[width=0.45\columnwidth]{pokerhandtime.eps} 
                        \label{fig:pokerhandtime}}
        \end{subfloat}
        \caption{Simulated tests on real dataset Pokerhand, with $N=10^6$ data dimension $D=10$ and $K=10$ clusters.}\label{fig:pokerhand}
\end{figure}

\section{Conclusions and future work}\label{sec:conclusion}

The present paper introduced a novel iterative data-reduction scheme,
SkeVa-SC, that enables grouping of data drawn from a union of
subspaces based on a random sketching and validation approach for
fast, yet-accurate subspace clustering (SC). Part of the proposed
algorithm builds on the sparse SC algorithm, but it can also utilize
any other SC algorithm. Analytical bounds were derived on the number
of required iterations, and performance of the algorithm was
evaluated on synthetic and real datasets. Future research directions
will focus on the development of online SkeVa-SC, able to handle not
only big, but also fast-streaming data.

\begin{appendices}
\section{Integrated square error performance}\label{app:boundsISE} 

Using the definition of $d$, $\forall f,g\in\mathcal{X}$, we have
\begin{equation}
  \label{eq:L2dist}
  d(f,g) := \sqrt{d_{\text{ISE}}(f,g)} := \sqrt{\int\left(f(\bm{x}) -
    g(\bm{x})\right)^2d\bm{x}}\,.
\end{equation}
Based on \eqref{eq:L2dist}, the concavity of $\sqrt{\cdot}$, and Jensen's
inequality~\cite{cover2012elements}, an upper bound on
$\Expect[d(\cdot,\cdot)]$ is
\begin{align}
  \Expect[d(\cdot,\cdot)]  =
  \Expect \left[\sqrt{d_{\text{ISE}}(\cdot,\cdot)}\right] \leq
  \sqrt{\Expect \left[d_{\text{ISE}}(\cdot,\cdot)\right]}
   = \sqrt{\Expect
    \left[d^2(\cdot,\cdot)\right]}\label{eq:jensendelta}
\end{align}
and consequently, $\Expect^2[d(\cdot,\cdot)] \leq
\Expect[d^2(\cdot,\cdot)]$.

To ensure positivity, the number of required iterations of
\eqref{eq:basicRlowerbound} can be rewritten as
\begin{equation}
R_{\max} \ge
\frac{\log(\frac{1}{1-p})}{\log(\frac{1}{\Pr(\mathcal{B}_\delta)})}. 
\end{equation}
Thus, for a fixed $p$ the lower bound on the number of required iterations
grows as the probability of a ``bad'' event (as defined in
Def.~\ref{def:prbadevent}) increases.

The bad event probability of \eqref{eq:basicbadevent} can be lower-bounded using the extended Markov inequality for the quadratic function in $\mathbb{R}_{+}$, namely
\begin{equation}\label{eq:prbadgeexpect_extend}
\prob(\mathcal{B}_{\delta}) \ge 1 -
\frac{\Expect\left[d^2(\hat{f},f_0)\right]}{\delta_0^2}\,.   
\end{equation}
For $\Expect[d^2(\hat{f},f_0)]$ fixed, the larger
$\delta_0$ the larger the lower bound on the probability of
$\mathcal{B}_\delta$ becomes, and consequently the lower bound on
$R_{\max}$ increases.

Based on \eqref{eq:markovd0}, define $\delta_0 := d(f_0,\hat{f})$,
which is a random variable given its dependence on $\hat{f}$. By the
triangle inequality, $\delta_0$ can be bounded as
\begin{equation}
\label{eq:triangle_app}
|\delta' - d(f,\hat{f})|\le \delta_0 \le d(f,\hat{f}) + \delta'\,.
\end{equation}
The upper bound of \eqref{eq:triangle_app} will be used to provide a data-driven and relatively ``safe'' lower bound on the probability of $\mathcal{B}_\delta$, using the aforementioned observations.
While $d(f,\hat{f})$ is a quantity that depends on $\hat{f}$, and it
is thus random, it will be further bounded by deterministic
quantities, yielding a deterministic bound on $\delta_0$.

Distance $\delta':= d(f,f_0)$ can be expressed in closed form using
(As.~\ref{as:mixturemult}) and property \eqref{eq:gaussiantrickmult} as 
\begin{alignat}{2}
    \delta'^2 & = &&\; d^2(f,f_0) = \int\left(f(\bm{x}) -
      f_0(\bm{x})\right)^2d\bm{x}  
     =  \int f^2(\bm{x})d\bm{x} + \int f_0^2(\bm{x})d\bm{x} - 2\int
    f(\bm{x})f_0(\bm{x})d\bm{x} \notag\\ 
    &  = && \int\sum_{\ell}\sum_{\ell'}w_{\ell}w_{\ell'}
    \phi_{\mathbf{\Sigma}_{\ell}}(\bm{\mu}_{\ell}  
    - \bm{x})\phi_{\mathbf{\Sigma}_{\ell'}}(\bm{\mu}_{\ell'} - \bm{x}) 
    + \int\phi_{\mathbf{H}_0}(\bm{\mu}_{0} -
    \bm{x})\phi_{\mathbf{H}_0}(\bm{\mu}_{0} - \bm{x}) \notag\\ 
    & && - 2\int\sum_{\ell}w_l\phi_{\mathbf{\Sigma}_{\ell}}(\bm{\mu}_{\ell} -
    \bm{x})\phi_{\mathbf{H}_0}(\bm{\mu}_{0} - \bm{x}) \notag\\ 
    & = &&\; \mathbf{w}^{\top}\mathbf{\Omega}_0\mathbf{w} +
    \frac{1}{(4\pi)^{D/2}|\mathbf{H}_0|^{1/2}} 
    - 2\sum_{\ell}w_{\ell}\phi_{\mathbf{\Sigma}_{\ell} +
      \mathbf{H}_0}(\bm{\mu}_{\ell} -
    \bm{\mu_0}) \label{eq:deltaprime_mult}  
\end{alignat}
where $\mathbf{w} := [w_1,w_2,\ldots,w_L]^{\top}$ is formed by the mixing
coefficients of \eqref{eq:mixtureofgaussiansmult}, and $|\mathbf{H}|$
denotes the determinant of $\mathbf{H}$. Distances $d(\hat{f},f)$ and
$d(\hat{f},f_0)$ are random variables since they depend on $\hat{f}$, which
in turn depends on the randomly drawn data $\bm{x}_i$. Due to
(As.~\ref{as:mixturemult}), their expectation w.r.t.\ the true data pdf $f$
can be expressed in closed form. As data are drawn independently per
iteration $r$, expectations do not depend on $r$. As such, we have
  \begin{alignat}{2}
    \Expect[d^2(\hat{f},f_0)] & =\; &&
    \Expect[d_{\text{ISE}}(\hat{f},f_0)] = 
    \Expect\left[\int\left(\hat{f}(\bm{x}) -
        f_0(\bm{x})\right)^2d\bm{x})\right] \hspace{100pt}  \notag\\
    & = &&  \int f_0^2(\bm{x})d\bm{x} + \Expect\left[\int
      \hat{f}^2(\bm{x})d\bm{x}\right] 
     - 2\Expect\left[\int
      f_0(\bm{x})\hat{f}(\bm{x})d\bm{x}\right]  \notag\\
    & = &&  \frac{1}{(4\pi)^{D/2}|\mathbf{H}_0|^{1/2}} +
    \int\Expect\left[\hat{f}^2(\bm{x})d\bm{x}\right]  
     - 2\Expect\left[\int f_0(\bm{x})\hat{f}(\bm{x})d\bm{x}\right]
    \notag \\   
    & = && \frac{1}{(4\pi)^{D/2}|\mathbf{H}_0|^{1/2}} +
    \frac{1}{n}\frac{1}{(4\pi)^{D/2}|\mathbf{H}|^{1/2}} 
     + \left(1 -
      \frac{1}{n}\right)\mathbf{w}^{\top}\mathbf{\Omega}_2\mathbf{w}
     - 2\sum_{\ell}w_{\ell}\phi_{\mathbf{H}+\mathbf{H}_0 + 
      \mathbf{\Sigma}_{\ell}}(\bm{\mu}_{\ell} -
    \bm{\mu}_0) \label{eq:delta0_mult}   
\end{alignat}
and
\begin{alignat}{1}
  \Expect[d^2(\hat{f},f)]  = \; \Expect[d_{\text{ISE}}(\hat{f},f)] 
   = \; \frac{1}{n(4\pi)^{D/2}|\mathbf{H}|^{1/2}} 
   +
  \mathbf{w}^{\top}\left(\left(1-\frac{1}{n}\right)\mathbf{\Omega}_2 -
    2\mathbf{\Omega}_1 +
    \mathbf{\Omega}_0\right)\mathbf{w} \label{eq:delta_mult}
\end{alignat}
with 
\begin{alignat}{2}
    \int\Expect[\hat{f}^2(\bm{x})]d\bm{x} & = && \;
    \int\frac{1}{n^2}\sum_{i}\sum_{j}\Expect\left[ 
      \phi_{\mathbf{H}}(\bm{x}-\bm{x}_i)
      \phi_{\mathbf{H}}(\bm{x}-\bm{x}_j)\right]d\bm{x} \notag\\ 
    & = &&\; \int\left\{\frac{1}{n^2}\sum_{i=j}\Expect\left
      [\phi_{\mathbf{H}}^2(\bm{x}-\bm{x}_i)\right] \right.  
     \left. + \frac{1}{n^2}\sum_{i\neq j}\Expect
    \left[\phi_{\mathbf{H}}(\bm{x}-\bm{x}_i)
      \phi_{\mathbf{H}}(\bm{x}-\bm{x}_j)\right]\right\} d\bm{x}   \notag\\ 
    & = &&\; \int\left\{\frac{1}{n}\Expect\left[
      \phi_{\mathbf{H}}^2(\bm{x}-\bm{x}_i)\right] \right. 
     \left. + \frac{n^2 - n}{n^2}\Expect^2\left[
      \phi_{\mathbf{H}}(\bm{x}-\bm{x}_i)\right]\right\}d\bm{x}  \notag\\ 
    & = &&\; \frac{1}{n}\frac{1}{\left(4\pi\right)^{D/2}|\mathbf{H}|^{1/2}} +
    \left(1-\frac{1}{n}\right)\mathbf{w}^{\top}
    \mathbf{\Omega}_2\mathbf{w} \label{eq:expectfhat2}  
\end{alignat}
where the third equality is due to the fact that $\bm{x}_i$'s are
independently drawn from $f$. Moreover,
\begin{align}
    \Expect  \left[\int f_0(\bm{x})\hat{f}(\bm{x})d\bm{x}\right]
     & = \frac{1}{n}\Expect\left[\sum_{i}\int\phi_{\mathbf{H}_0}(\bm{x}-\bm{\mu}_0)
      \phi_{\mathbf{H}}(\bm{x}-\bm{x}_i)d\bm{x}\right]   
     = \frac{1}{n}\Expect\left[\sum_{i}\phi_{\mathbf{H}+\mathbf{H}_0}(\bm{x}_{i}
      -\bm{\mu}_0)\right] \notag\\
    & = \Expect\left[\phi_{\mathbf{H}+\mathbf{H}_0}(\bm{x}
      -\bm{\mu}_0)\right]   
     = \int\sum_{\ell}w_{\ell}\phi_{\mathbf{\Sigma}_{\ell}}(\bm{x}-
    \bm{\mu}_{\ell})\phi_{\mathbf{H}+\mathbf{H}_0}(\bm{x}-\bm{\mu}_0)d\bm{x} \notag\\
    & = \sum_{\ell}w_{\ell}\phi_{\mathbf{H} + \mathbf{H}_0 + \mathbf{\Sigma}_{\ell}}
    (\bm{\mu}_{\ell}-\bm{\mu}_0)\,. \label{eq:expectfhatf0}  
\end{align}

Interestingly, the probability of $d(f,\hat{f})$ being far from its
ensemble average $\Expect[d(f,\hat{f})]$ can be bounded via the general
inequality~\cite{devroye1991exponential}
\begin{align}
   \Pr \left(\left|\|f - \hat{f}\|_p - \Expect \left[\|f -
        \hat{f}\|_p \right]\right| \ge t\right) 
    \le 2\exp\left(-\frac{nt^2h^{2 -
        2/p}}{2\|K_{\mathbf{I}}(\bm{x})\|_p^2}\right) \label{eq:ebounddelta} 
\end{align} 
where $\|\cdot\|_p$ denotes the $L_p$-norm for $p\ge 1$, i.e.,
$\|f\|_p := \left(\int |f(x)|^pdx\right)^{1/p}$. For the
Gaussian kernel with covariance matrix $\mathbf{H} := 
h^2\mathbf{I}_D$ and $p=2$, the norm $\|K_{\mathbf{I}}(\bm{x})\|_2^2$ becomes
\begin{align}
\|K_{\mathbf{I}}(\bm{x})\|_2^2  =  \|\phi_{\mathbf{I}}(\bm{x})\|_2^2
 = \int \phi_{\mathbf{I}}^2(\bm{x}) d\bm{x} =
\phi_{2\mathbf{I}}(\bm{0}) = \frac{1}{(4\pi)^{D/2}}\,. \label{eq:Knorm} 
\end{align}
Consequently, \eqref{eq:ebounddelta} reduces to
\begin{align}
    \Pr\left(\left|d(f,\hat{f}) -
        \Expect\left[d(f,\hat{f})\right]\right| \ge t\right) 
     \le
    2\exp\left(-\frac{nt^2h\left(4\pi\right)^{D/2}}{2}\right)\,.
    \label{eq:ebounddelta2}     
\end{align}
Letting $q := \Pr(|d(f,\hat{f}) - \Expect[d(f,\hat{f})]| \ge t)$,
\eqref{eq:ebounddelta2} yields 
\begin{align}
\label{eq:qbound}
q \le 2\exp\left(-\frac{nt^2h\left(4\pi\right)^{D/2}}{2}\right)
\end{align}
 and solving \eqref{eq:qbound} w.r.t.\ $t$, we arrive at
 \begin{align}
 t \le \sqrt{-\frac{2\log(q/2)}{nh\left(4\pi\right)^{D/2}}}\,.
 \end{align}
 Since $1-q = \Pr(|d(f,\hat{f}) - \Expect[d(f,\hat{f})]| < t)$, the
 distance $d(f,\hat{f})$ can be upper bounded with
 probability $1-q$ as
 \begin{align}
   d(f,\hat{f})  <
   \sqrt{-\frac{2\log(q/2)}{nh\left(4\pi\right)^{D/2}}} + 
   \Expect[d(f,\hat{f})] 
    \le \sqrt{-\frac{2\log(q/2)}{nh\left(4\pi\right)^{D/2}}} + 
   \sqrt{\Expect[d^2(f,\hat{f})]} \label{eq:deltabound}
 \end{align}
 where the second inequality follows readily from
 \eqref{eq:jensendelta}.


 An upper bound with probability $1-q$ to obtain the value of
 $\delta_0$ can be now derived using \eqref{eq:triangle_app} and
 \eqref{eq:deltabound}:
 \begin{align}
   \delta_0 \le d(f,\hat{f}) + \delta' 
   \le \sqrt{-\frac{2\log(q/2)}{nh\left(4\pi\right)^{D/2}}} + 
   \sqrt{\Expect[d^2(f,\hat{f})]} + \delta' =:
     \theta. \label{eq:delta_0bound} 
 \end{align}
Using $\theta$ in place of $\delta_0$ will yield an increased lower
bound to $R_{\max}$, as it will increase the value of $1 -
\Expect[d^2(\hat{f},f_0)]/\delta_0^2$, in \eqref{eq:prbadgeexpect_extend}; thus, with probability $1-q$
\begin{align}
	 1 - \frac{\Expect\left[d^2(\hat{f},f_0)\right]}
	{\left(\delta' + \sqrt{-\frac{2\log(q/2)}{nh\left(4\pi\right)^{D/2}}}  
		+ \sqrt{\Expect[d^2(f,\hat{f})]}\right)^2}  
	\ge 1 - \frac{\Expect\left[d^2(\hat{f},f_0)\right]}
	{\delta_0^2}\,. \label{eq:lowerboundlog} 
\end{align}



Finally, using \eqref{eq:lowerboundlog}, an overestimate $\hat{\varrho}$,
with probability $1-q$, of the lower bound $\varrho$ in
\eqref{eq:theoremRlowerbound} is
\begin{equation}
  \label{eq:lowerboundRapp}
  \hat{\varrho} := \frac{\log(1-p)}{\log\left(1 -
      \frac{\Expect\left[d^2(\hat{f},f_0)\right]}
      {\left(\delta' + \sqrt{-\frac{2\log(q/2)}{nh\left(4\pi\right)^{D/2}}}  
        + \sqrt{\Expect[d^2(f,\hat{f})]}\right)^2}\right)}.
\end{equation}
Upon 
$\theta_1 :=  (-{2\log(q/2)}/{(nh(4\pi)^{D/2})})^{1/2} +
(\Expect[d^2(f,\hat{f})])^{1/2}$, and $\theta_2 := \delta'$, the claim of
Thm.~\ref{th:Rbound}.\ref{thm:2nd.part} is established.
\end{appendices}



\bibliographystyle{IEEEtran}
\bibliography{./bib/PAMI_bib}

\end{document}